\let\Algorithm\algorithm
\renewcommand\algorithm[1][]{\Algorithm[#1]\setstretch{1.4}}
\theoremstyle{plain}
\newtheorem{theorem}{Theorem}[section]
\newtheorem{proposition}[theorem]{Proposition}
\newtheorem{lemma}[theorem]{Lemma}
\theoremstyle{definition}
\newtheorem{definition}[theorem]{Definition}
\theoremstyle{remark}
\icmltitlerunning{Cover Learning for Large-Scale Topology Representation}
\setlist{leftmargin=4.5mm}
\newtheorem{goal}{Goal}
\newtheorem{optimizationproblem}{Optimization Problem}
\DeclareMathAlphabet{\mathpzc}{OT1}{pzc}{m}{it}
\newcommand\DEFINEALPHABETLOOP[3]{%
  \ifx\relax#3\expandafter\@gobble\else\expandafter\@firstofone\fi
  {\expandafter\newcommand\expandafter*\csname#3#1\endcsname{#2{#3}}%
   \DEFINEALPHABETLOOP{#1}{#2}}%
}%
\newcommand\Definealphabet[2]{%
  \DEFINEALPHABETLOOP{#1}{#2}abcdefghijklmnopqrstuvwxyzABCDEFGHIJKLMNOPQRSTUVWXYZ\relax
}%
\newcommand{\parts}{\mathsf{Parts}}
\newcommand{\softmax}{\mathsf{softmax}}
\renewcommand{\phi}{\varphi}
\renewcommand{\epsilon}{\varepsilon}
\newcommand{\topgood}{\mathsf{T}}
\newcommand{\regular}{\mathsf{G}}
\newcommand{\reg}{\mathsf{R}}
\newcommand{\Small}{\mathsf{M}}
\newcommand{\SupLevel}[2]{\left\{#1 > #2\right\}}
\newcommand{\Nerve}{\mathsf{Ner}}
\newcommand{\Smallsym}{$(\Small)$}
\newcommand{\topgoodsym}{$(\topgood)$}
\newcommand{\regularsym}{$(\regular)$}
\def\noteson{%
\gdef\luis##1{\noindent{\color{blue}[Luis: ##1]}}%
\gdef\uzu##1{\noindent{\color{violet}[Uzu: ##1]}}%
\gdef\heather##1{\noindent{\color{violet}[Heather: ##1]}}%
\gdef\todo##1{\noindent{\color{red}[todo: ##1]}}}%
\csv@pretable\begin{tabular}{|*{\csv@columncount}{c|}}\csv@tablehead,
\csv@tablefoot\end{tabular}\csv@posttable,
\newcommand{\csvautotabularcenter}[2][]{\csvloop{autotabularcenter={#2},#1}}
\begin{document}

\twocolumn[
    \icmltitle{Cover Learning for Large-Scale Topology Representation}

    % It is OKAY to include author information, even for blind
    % submissions: the style file will automatically remove it for you
    % unless you've provided the [accepted] option to the icml2025
    % package.

    % List of affiliations: The first argument should be a (short)
    % identifier you will use later to specify author affiliations
    % Academic affiliations should list Department, University, City, Region, Country
    % Industry affiliations should list Company, City, Region, Country

    % You can specify symbols, otherwise they are numbered in order.
    % Ideally, you should not use this facility. Affiliations will be numbered
    % in order of appearance and this is the preferred way.
    \icmlsetsymbol{equal}{*}

    \begin{icmlauthorlist}
        \icmlauthor{Luis Scoccola}{inst1}
        \icmlauthor{Uzu Lim}{inst2}
        \icmlauthor{Heather A.~Harrington}{inst3}
        %\icmlauthor{}{sch}
        %\icmlauthor{}{sch}
    \end{icmlauthorlist}

    \icmlaffiliation{inst1}{Centre de Recherches Mathématiques et Institut des sciences mathématiques,
    Laboratoire de combinatoire et d'informatique mathématique de l'Université du Québec à Montréal,
    Université de Sherbrooke, Canada.}
    \icmlaffiliation{inst2}{Queen Mary University of London, United Kingdom.}
    \icmlaffiliation{inst3}{Max Planck Institute of Molecular Cell Biology and Genetics, Dresden, Germany; Centre for Systems Biology, Dresden, Germany; Faculty of Mathematics, Technische Universität Dresden, Germany; Mathematical Institute, University of Oxford, United Kingdom}

    \icmlcorrespondingauthor{Luis Scoccola}{luis.scoccola@gmail.com}

    % You may provide any keywords that you
    % find helpful for describing your paper; these are used to populate
    % the "keywords" metadata in the PDF but will not be shown in the document
    \icmlkeywords{Machine Learning, ICML}

    \vskip 0.3in
]

% this must go after the closing bracket ] following \twocolumn[ ...

% This command actually creates the footnote in the first column
% listing the affiliations and the copyright notice.
% The command takes one argument, which is text to display at the start of the footnote.
% The \icmlEqualContribution command is standard text for equal contribution.
% Remove it (just {}) if you do not need this facility.

%\printAffiliationsAndNotice{}  % leave blank if no need to mention equal contribution
\printAffiliationsAndNotice{} % otherwise use the standard text.

\begin{abstract}
    Classical unsupervised learning methods like clustering and linear dimensionality reduction parametrize large-scale geometry
    when it is discrete or linear, while more modern methods from manifold learning
    find low dimensional representation or infer local geometry by constructing a graph on the input data.
    More recently, topological data analysis popularized the use of simplicial complexes to represent data topology with two main methodologies: topological inference with geometric complexes and large-scale topology visualization with Mapper graphs -- central to these is the nerve construction from topology, which builds a simplicial complex given a cover of a space by subsets.
    While successful, these have limitations: geometric complexes scale poorly with data size, and Mapper graphs can be hard to tune and only contain low dimensional information.
    In this paper, we propose to study the problem of learning covers in its own right, and from the perspective of optimization.
    We describe a method for learning topologically-faithful covers of geometric datasets, and show that the simplicial complexes thus obtained can outperform standard topological inference approaches in terms of size, and Mapper-type algorithms in terms of representation of large-scale topology.
\end{abstract}

\newpage

\section{Introduction}
\label{section:introduction}

\subsection{Context}
We are concerned with the problem of learning a representation of the large-scale structure of geometric datasets, such as point clouds.
Classical instances of this problem are the clustering problem,
%, where the goal is to discover large-scale discrete groupings of data points,
linear dimensionality reduction,
%which looks to parametrized large-scale linear structure,
and manifold learning.
%which seeks to parametrize or embed data using concepts from differential geometry.
More recently, building on tools from algebraic topology and computational geometry, topological data analysis (TDA) was born with the goal of learning and quantifying the topology of data beyond the discrete and linear cases.
What distinguishes TDA from previous methodologies, such as classical manifold learning, is its explicit reliance on (higher-dimensional) simplicial complexes, with the stated goal often being that of producing a simplicial complex which recovers the topology of the underlying space from which the data was sampled.
%A main notion from topology used in TDA is that of a simplicial complex.

Simplicial complexes (\cref{section:simplicial-complexes}) are a specific type of hypergraph, and were originally used in topology to model topological spaces combinatorially; see \cref{figure:nerve-example} for an example.
%Although $1$-dimensional simplicial complexes are simply (undirected) graphs, and graphs have of course been used in clustering and manifold learning,
Recent applications include
neuroscience \cite{gardner-et-al},
biology \cite{benjamin-et-al},
physics \cite{sale-et-al},
and machine learning \cite{maggs-hacker-rieck}.
%\cite{oudot,boissonat-chazal-yvinec,gardner-et-al} and higher order phenomena
%\cite{salnikov,}.

The two main methodologies in TDA are topological inference with geometric complexes, and large-scale topology visualization with Mapper graphs; we elaborate on the limitations of these in \cref{section:cover-learning-algorithms,section:topological-inference}.
In this paper, we propose \emph{cover learning} as a unified unsupervised learning procedure which can efficiently represent the large-topology of geometric data, and simultaneously address the limitations of these two methodologies.
%we show that a single approach based on a principle we show that an approach based on cover learning can simultaneously, and gives experimental proof that both topological inference and large-scale topology visualization can be handled by a single method based on cover learning.
%Our goal is to design unsupervised learning methods that can simultaneously address the limitations of both of these methodologies.
%Both these methodologies can be seen as instances of simplicial complex learning, which can be addressed by means of cover learning (described below).

We now give more context and motivation for cover learning, and then describe our contributions.

\subsection{Learning simplicial complexes}
A \emph{simplicial complex on} a set $X$ consists of a simplicial complex $K$ together with a function $K_0 \to \parts(X)$ from the vertices of $K$ to the set of subsets of $X$,
the idea being that vertices represent groups of data points, and higher-dimensional simplices represent relationships between these.
%If $K$ is a simplicial complex on $X$,
\emph{Simplicial complex learning} is the process of constructing a simplicial complex on an input dataset, and it encompasses standard constructions
such as graphs in classical unsupervised learning
(e.g., neighborhood graphs in Laplacian eigenmaps \cite{belkin-niyogi}, t-SNE \cite{maaten-hinton}, and DBSCAN \cite{ester-et-al}),
simplicial complexes in topological inference
(e.g., geometric complexes \cite{edelsbrunner-harer} and Mapper \cite{singh-memoli-carlsson}),
and meshes is surface reconstruction \cite{dey}.

\subsection{Learning covers}
A \emph{cover} of a set $X$ is a family $\{U_i\}_{i \in I}$ of subsets of $X$, which cover it in the sense that $X = \bigcup_{i \in I} U_i$.
\emph{Cover learning} is the process of constructing a cover of an input dataset.

%For example, the clustering problem is a particular instance of the cover learning problem, in which the sets in the cover are required to be disjoint.
%The following fundamental construction from topology allows us to reduce simplicial complex learning to cover learning.

The \emph{nerve} construction from topology (\cref{definition:nerve}) takes as input a cover $\Ucal$ of a set $X$ and outputs a simplicial complex over $X$, denoted $\Nerve(\Ucal)$.
In words, the nerve of a cover has as vertices the sets of the cover, and as higher-dimensional simplices the non-empty intersections between these sets; see \cref{figure:nerve-example} for examples.
Thus, the nerve construction allows us to reduce simplicial complex learning to cover learning, and suggests the following goal:

\medskip

\begin{tcolorbox}
    \begin{goal}[Informal]
        \label{goal:abstract-goal}
        Given a geometric dataset, cover it by local patches, in such a way that the nerve of the cover is a faithful representation of the shape of the data.
        More precisely:
        \begin{itemize}
            \item \Smallsym{} The sets in the cover should be \textbf{small} with respect to the \textbf{size} of the dataset.
                  %\item[R] The sets in the cover ought to be as \textbf{regular} subsets of the data (with respect to the geometry of the data), in order to avoid overfitting.
                  %\item[\independentsym] The pairwise intersections between different sets in the cover are small, so that they are as \textbf{independent} as possible.
                  %, in order to avoid unnecessary overlaps.
            \item \regularsym{} The sets in the cover should be \textbf{regular} with respect to the \textbf{geometry} of the data.
            \item \topgoodsym{} The nerve of the cover should be a \textbf{faithful} representation of the \textbf{topology} of the data.
        \end{itemize}
    \end{goal}
\end{tcolorbox}

\begin{figure}
    \includegraphics[width=\linewidth]{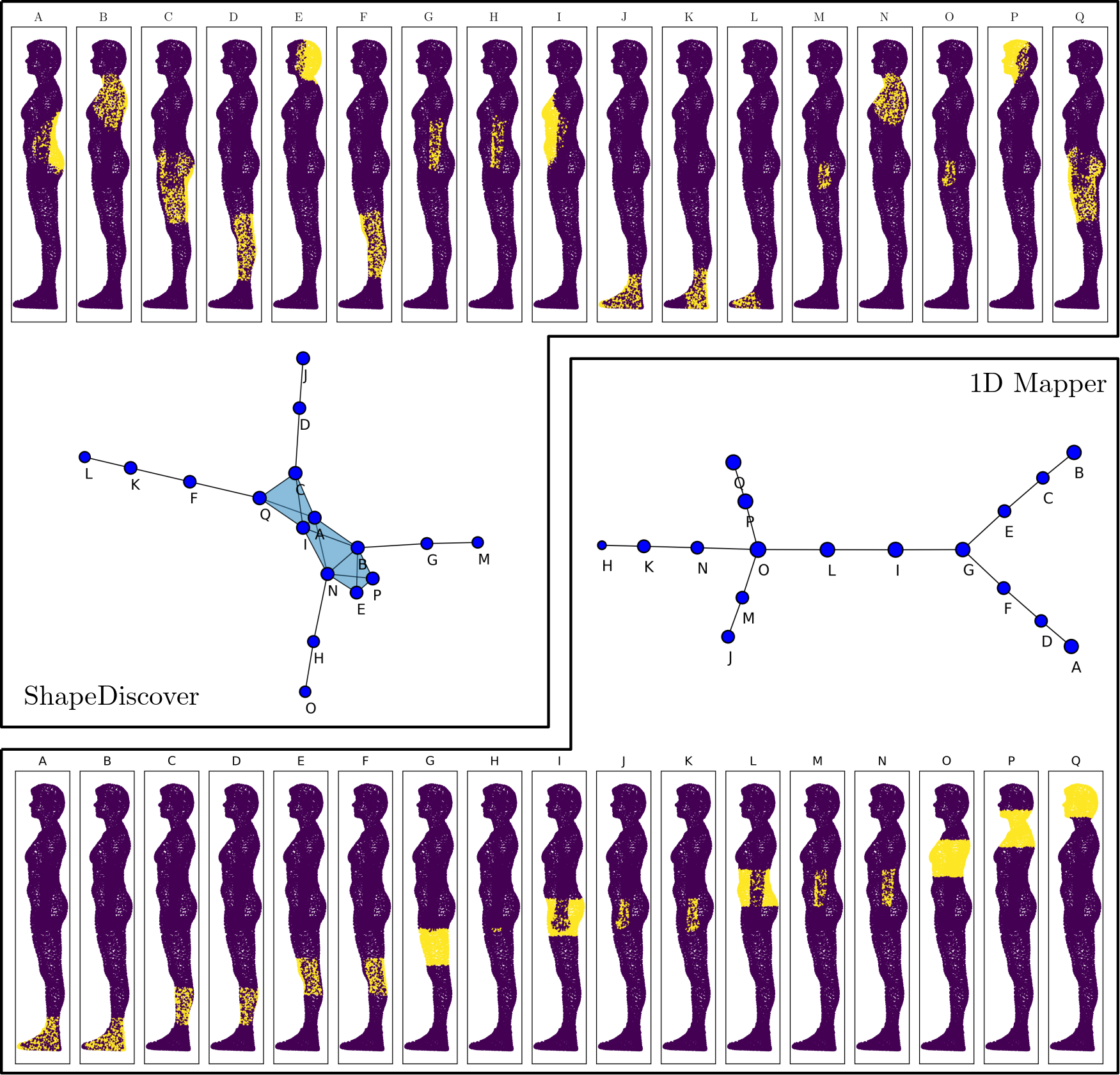}
    \vspace{-0.2cm}
    \caption{
        Two covers of a human 3D point cloud, obtained with ShapeDiscover and 1D Mapper, and the nerve of the covers.
        1D Mapper's nerve is a graph, with no topological information above dimension $1$.
        ShapeDiscover's nerve is a $2$-dimensional simplicial complex, reflecting the fact that the human shape is hollow.}
    \label{figure:nerve-example}
\end{figure}

\medskip

%We formalize conditions \Smallsym{} and \independentsym{} using measure theory,
We formalize condition \Smallsym{} using measure theory,
condition \regularsym{} using geometry,
and
condition \topgoodsym{} using topology, specifically, homology (\cref{section:homology}).

\subsection{Cover learning vs.~clustering}

The clustering problem can be understood as the zero-dimensional case of the cover learning problem, in which the relevant geometry of the data is entirely determined by the connected components, with no relevant topological information above dimension zero.
Although the cover learning problem cannot be completely reduced to the clustering problem,
our proposed method does rely on clustering for initialization; see \cref{section:initialization}.
%initialization plays an important role for large datasets, it does not, by itself, solve the cover learning problem: There are no intersections between the clusters in a clustering, so the nerve is discrete and contains no topological information (above dim 0), meaning that a clustering by itself wouldn't be useful for topological inference (experiments A and B) or visualization of interesting geometric structure (eg flare structure in Fig. 5).}

\subsection{Contributions}

We describe our four main contributions.

\begin{enumerate}
    \item We identify the cover learning problem as a general purpose unsupervised learning problem.
          We observe that several methods from TDA are in essence cover learning algorithms, and review their strengths and limitations (\cref{section:cover-learning-algorithms}).

    \item
          We give a formal interpretation of the cover learning problem from the viewpoint of geometry and topology (\cref{goal:riemannian-case}), and derive a principled loss function for cover learning (\cref{problem:riemannian-case-fuzzy-cover}), in the idealized scenario where the data consists of a Riemannian manifold.
          Our main theoretical result (\cref{theorem:main-thm}) shows that the terms in our loss function can be either computed or bounded from above effectively with standard losses.

    \item
          We propose practical estimators for the terms in our loss function, in the case where the space is a weighted graph, and show that optimization is feasible using known optimization tools, including (graph) neural networks and topological persistence optimization (\cref{section:implementation}).

          %\item \luis{We probably want to skip this for this paper} We propose a consistency theory for cover learning, and show that a well known algorithm from topological inference is consistent in this sense (\cref{section:theory-cover-learning}).
    \item We provide an implementation of ShapeDiscover \cite{shapediscover-code}, a cover learning algorithm based on our theory, and showcase it on two sets of experiments: a quantitative one on topological inference, and a qualitative one on large-scale topology visualization.
          In the first case, ShapeDiscover learns topologically correct simplicial complexes, on synthetic and real data, of smaller size than those obtained with previous topological inference approaches.
          In the second, we argue that ShapeDiscover represents the large-scale topology of real data better, and with more intuitive parameters, than previous TDA algorithms that fit the cover learning framework.
          %      \luis{ShapeDiscoverLite depends on few, robust, distance/independent parameters}
          %the simplicial
          %complexes thus obtained can outperform standard
          %topological inference approaches in terms of size,
          %and Mapper-type algorithms in terms of represen-
          %tation of large-scale topology.
          %      \todo{computational examples/applications} (\cref{section:computational-examples}).
          %      \luis{current cover learning algorithms depend on distance scale choice ($\epsilon$) and/or a function to $\Rbb$, plus other parameters}
\end{enumerate}

\subsection{Structure of the paper}
In \cref{section:cover-learning-algorithms}, we overview related work, specifically algorithms in TDA that fit the framework of cover learning.
%, topological inference, and topological optimization.
In \cref{section:theory}, we develop theory for cover learning based on optimization, which we use in \cref{section:implementation} to design a cover learning algorithm.
\cref{section:computational-examples} contains computational examples showcasing our methods.
%\cref{section:topology-background} contains topology background, \cref{section:derivation-objective-function,section:relaxed-objective} contain further details about the theory,
%and \cref{section:more-on-loss,section:more-on-examples} contain further details about implementation and examples.
\cref{section:topology-background} has background, and other appendices contain details about theory and experiments.

\section{Related work: cover learning in TDA}
\label{section:cover-learning-algorithms}

Although the concept of cover learning algorithm is introduced in this paper (\cref{section:introduction}), several unsupervised learning algorithms in TDA are in essence cover learning algorithms.
Here, we overview some of these.
For other related work, see \cref{section:cover-learning-coarse-graining}.
%Mapper \cite{singh-memoli-carlsson}, Differentiable Mapper \cite{oulhaj-carriere-michel}, Adaptive Mapper \cite{chalapathi-zhou-wang}, Ball Mapper \cite{dlotko,dlotko-2}.
%We also mention ClusterGraph \cite{dlotko-furnari-hallier-jurek}\luis{finish sentence}

\medskip
\noindent \textbf{Cover learning algorithms with a filter function.}
To the best of our knowledge, 1D Mapper \cite{singh-memoli-carlsson} is the first cover learning algorithm designed with the nerve construction in mind (see \cref{alg:mapper} for the cover learning algorithm underlying 1D Mapper).
The advantages of 1D~Mapper are its simplicity and flexibility: different choices of the function $f : X \to \Rbb$, known as the \emph{filter function}, can provide widely different outputs, reflecting the structure of the data as seen by the filter function.
We now describe four of 1D Mapper's main limitations:
\begin{enumerate}
    \item \textit{Dependence and choice of filter.} The filter function can be hard to choose if there are no obvious candidates.
          Moreover, to the best of our knowledge, there are no general methodologies for understanding the dependence of 1D Mapper on the filter function.
    \item \textit{Dependence and choice of cover of range of filter.} The initial cover $\{I_i\}_{i = 1}^k$ of $\Rbb$ (or, equivalently, of the range $[\min f, \max f]$ of the filter function) is also difficult to choose.
          This requires choosing the number of intervals~$k$, as well as their endpoints.
    \item \textit{Dependence and choice of clustering algorithm.} The choice of clustering algorithm $C_\theta$ and its parameter(s)~$\theta$ also has a big impact on the output, and is a difficult choice, as is familiar from cluster analysis.
    \item \textit{Unsuitability for higher-dimensional topological inference.} The nerve of the cover produced by 1D Mapper is $1$-dimensional, and does not contain higher-dimensional information.
          We elaborate on this in \cref{section:unsuitability-mapper}.
\end{enumerate}

\begin{algorithm}[tb]
    \caption{1D Mapper cover learning algorithm}
    \label{alg:mapper}
    \begin{algorithmic}
        \vspace*{0.1cm}
        \STATE {\bfseries Input:} Data $X$, function $f : X \to \Rbb$, clustering algorithm $C_\theta$, parameter(s) $\theta$ for $C_\theta$, cover $\{I_i\}_{i=1}^k$ of $\Rbb$.\\
        Take pullback cover $\{f^{-1}(I_i)\}_{i=1}^k$ of $X$
        \STATE Let $\Ucal_i \coloneqq C_\theta(f^{-1}(I_i))$ for $1 \leq i \leq k$
        \STATE {\bfseries Return} The union $\bigcup_{i=1}^{k} \Ucal_i$.
    \end{algorithmic}
\end{algorithm}

\begin{algorithm}[tb]
    \caption{Ball Mapper cover learning algorithm}
    \label{alg:ball-mapper}
    \begin{algorithmic}
        \vspace*{0.1cm}
        \STATE {\bfseries Input:} Data $X$, $\epsilon > 0$.
        \STATE Build an $\epsilon$-net $\{y_i\}_{i=1}^k$ of $X$
        \STATE {\bfseries Return} The cover $\{B(y_i,\epsilon)\}_{i=1}^k$.
    \end{algorithmic}
\end{algorithm}

\vspace*{-0.2cm}

Many strategies have been proposed to overcome the above limitations; here we overview some of these.
In \cite{chalapathi-zhou-wang}, limitation (2) is addressed by automatically refining the initial cover of the range of the filter function using information-theoretic criteria; limitation (3) it also partially addressed by automatically searching for parameters for the clustering algorithm.
Similarly, \cite{bishal-et-al,quang-et-al,alvarado-et-al} aim at making the choice of initial cover of the range easier and more robust, by, e.g., using variations in data density, while \cite{ruscitti-mcinnes} exploits variations in data density to improve the pullback cover directly.
Multiscale Mapper \cite{dey-memoli-wang} addresses limitation (2), at the expense of producing a one-parameter family of covers of the data (rather than a single cover), which can be analyzed using persistence.
Limitation (4) can be addressed, as proposed in the original paper \cite{singh-memoli-carlsson}, by allowing the filter function to take values in a higher-dimensional Euclidean space, which renders the choice of the initial cover $\{I_i\}_{i = 1}^k$ of the codomain of the filter function $\Rbb^n$ significantly more delicate; we refer to this as multidimensional Mapper.

Recently, and closest to this paper, Differentiable 1D Mapper \cite{oulhaj-carriere-michel} was introduced to address (1) by using optimization to automatically search for a filter function; this comes at the expense of having to choose a loss function and a parametric family of filter functions.

Many variations of Mapper exist, and Mapper in general has been shown to recover meaningful geometric structure in applications \cite{nicolau-et-al,rizvi-et-al}.
However, we believe that the existence of a suitable filter function is a strong assumption, and that it is worth considering alternative solutions to the cover learning problem.
%\luis{As far as we know, although theoretically possible, sophisticated automatic parameter selection/filter function selection + multidimensional mapper is not really a method being used?}

\newpage
\noindent \textbf{Cover learning without a filter function.}
The category of cover learning algorithms which do not rely on a filter function (implicitly or explicitly) is remarkably small.
To the best of our knowledge, there is a single TDA cover learning algorithm in this category: Ball Mapper \cite{dlotko,dlotko-2}, described in \cref{alg:ball-mapper}.
Consistency results can be proven for Ball Mapper, which imply that, contrary to cover learning algorithms based on 1D filter functions, Ball Mapper can be used to do higher-dimensional topological inference.

Ball Mapper's main limitation is its reliance on a global distance scale $\epsilon > 0$, which must be chosen by the user, does not allow for variations in density, and is susceptible to the curse of dimensionality.
Of note is the fact that Ball Mapper is essentially equivalent to the Witness complex \cite{desilva-carlsson} with parameter $v=0$ (\cref{section:witness-complex-ball-mapper}).
%Because of this,

In conclusion, the only two TDA cover learning algorithms that do not require the user to provide a filter function are Ball Mapper and Differentiable Mapper, and the latter still requires a parametric family of filter functions.

\section{Theory}
\label{section:theory}

%\subsection{An objective function in the Riemannian case}
\label{section:abstract-objective-function}

%\noindent\textbf{An idealized scenario.}
For the purposes of deriving a sensible loss function for cover learning, let us first interpret \cref{goal:abstract-goal} in the case where the data consists of a compact $d$-dimensional Riemannian manifold $\Mcal$.
In order to make optimization feasible, we later extend the optimization space from that of covers to that of fuzzy covers (a certain space of function on $\Mcal$).

\smallskip \noindent \textbf{Short topology recap.}
We need three standard notions from topology: homology, homology-good covers, and the nerve theorem;
references are given in \cref{section:topology-background}, but let us describe the basic ideas here, for convenience.
The \emph{homology} construction takes as input a topological space $X$, and returns vector spaces $\Hsf_i(X)$ for each $i \in \Nbb$, in such a way that $\dim(\Hsf_i(X))$ counts the number of $i$-dimensional holes in $X$; for example, $\Hsf_0(X)$ counts connected components, while $\Hsf_1(X)$ counts essential loops.
A \emph{homology-good} open cover of a topological space is a cover $\{U_i\}_{i \in I}$ such that, for every $J \subseteq I$, the reduced homology of $\bigcap_{j \in J} U_j$ is zero.
The \emph{nerve theorem} says that if $\Ucal$ is a homology-good open cover of a topological space $X$, then the homology of the nerve $\Nerve(\Ucal)$ is the same as that of $X$.

\subsection{Formal goal}

Condition \Smallsym{} in \cref{goal:abstract-goal} is formalized by requiring the cover elements to have small measure, condition \regularsym{} by asking that the cover elements have boundary of small measure, while condition \topgoodsym{} is formalized by asking the cover to be homology-good, which ensures that the nerve recovers the correct homology, by the nerve theorem.

\begin{tcolorbox}%[boxsep=1pt,left=0pt,right=2pt,top=1pt,bottom=1pt]
    \begin{goal}
        \label{goal:riemannian-case}
        Find open cover $\{U_i\}_{i \in I}$ of $\Mcal$ such that:
        \begin{itemize}
            \item \Smallsym{} $\Hcal^d(U_i)$ is small for all $i$.
                  %\item[\independentsym] The sum $\sum_{i \neq j \in I} \Hcal^d(U_i \cap U_j)$ is small;
            \item \regularsym{} $\Hcal^{d-1}(\partial U_i)$ is small for all $i$.
            \item \topgoodsym{} $\{U_i\}_{i \in I}$ is a homology-good cover of $\Mcal$.
        \end{itemize}
        Where $\Hcal^\ell$ is the $\ell$-dimensional Hausdorff measure.
    \end{goal}
\end{tcolorbox}

\subsection{Optimization over spaces of covers}

Solving \cref{goal:riemannian-case} directly using optimization is difficult, as the set of covers of a space does not have a natural smooth structure (even worse, when the space is finite, as is the case for finite data, the space of covers is discrete).
We thus turn our focus to a larger space: that of fuzzy covers, which is a space of functions, and which is easier to parametrize.

\subsubsection{Fuzzy covers}
%\luis{must cover the following into closed suplevel sets, and cover by submanifolds}
Observe that a $k$-element cover of a set, i.e., a cover of the form $\{U_i\}_{i=1}^k$, is the same as a function $g : X \to \{0,1\}^k$ such that $\max_{1 \leq i \leq k} g_i(x) = 1$ for all $x \in X$.
Here, the $\max$ operation is ensuring that every element belongs to some cover element.
Indeed, the equivalence is given by mapping a cover $\{U_i\}_{1 \leq i \leq k}$ to the vector $(\chi_{U_1}, \dots, \chi_{U_k}) : X \to \{0, 1\}^k$ of indicator functions.

In order to turn the space of covers into a continuous space, we replace the two-element set $\{0,1\}$ by the unit interval $[0,1]$, and define
\begin{equation}
    \label{equation:definition-max-simplex}
    \Gamma^{k-1} \coloneqq \left\{p \in [0,1]^k : \max_{1\leq i \leq k} p_i = 1\right\}\; \subseteq\; \Rbb^k.
\end{equation}
%Equivalently, $\Gamma^{k-1}$ is the intersection of the $\|-\|_\infty$-sphere and the positive cone.
The exponent $k-1$ in $\Gamma^{k-1}$ represents the topological dimension of the space.
This notation is consistent with the usual convention that denotes the standard simplex with $k$ vertices by $\Delta^{k-1}$ (see also \cref{section:parametrizing-fuzzy-covers-softmax}).

\smallskip

\begin{definition}
    A $k$-element \emph{fuzzy cover} of a set $X$ is a function $X \to \Gamma^{k-1}$.
\end{definition}

%Then, the space of $k$-element fuzzy covers on $X$ is the space $(\Gamma^{k-1})^X$.
%Note that a $k$-element cover really models a cover with \emph{at most} $k$ elements, since nothing prevents some of 

In order to produce a single cover out of a fuzzy cover, one can threshold it, as follows.
%\footnote{This is analogous to how a hierarchical clustering can be flattened to get a single clustering.}.
Given any function $f : X \to [0,1]$ and a threshold $\lambda \in [0,1]$, consider the \emph{suplevel set} $\SupLevel{f}{\lambda} \coloneqq \{x \in X : f(x) > \lambda\}$.
Then, given a fuzzy cover $g : X \to \Gamma^{k-1}$ and $\lambda \in [0,1]$, we can threshold the fuzzy cover $g$ to obtain a cover of $X$, as follows:
\[
    \SupLevel{g}{\lambda} \coloneqq \big\{\SupLevel{g_i}{\lambda}\big\}_{1 \leq i \leq k}\,.
\]
The fact that $\SupLevel{g}{\lambda}$ is indeed a cover of $X$ follows immediately from the $\max$-condition in \cref{equation:definition-max-simplex}.

Since this will be of use later, let us remark here that a fuzzy cover $g : X \to \Gamma^{k-1}$ gives rise to a filtered simplicial complex $ \{\Nerve(\SupLevel{g}{\lambda})\}_{\lambda \in [0,1]}$ (see \cref{section:filtered-simplicial-complex-from-fuzzy-cover}).

\subsubsection{Random covers from fuzzy covers}
Suppose that $g : X \to \Gamma^{k-1}$ is a fuzzy cover of $X$.
By sampling a threshold $\lambda \in [0,1]$ uniformly at random, and considering $\SupLevel{g}{\lambda}$, we get a random cover of $X$, so each fuzzy cover $g$ induces a probability measure $\mu_g$ on the space of covers of $X$.
Formally,
the probability measure $\mu_g$ is the pushforward of the uniform measure on $[0,1]$
along the function
$\lambda \mapsto \SupLevel{g}{\lambda}$ (see \cref{section:sigma-algebra} for details).

Random covers in the context of TDA were first considered in \cite{oulhaj-carriere-michel}, with the similar goal of producing a cover of a dataset via optimization (\cref{section:cover-learning-algorithms}).
Their methodology, however, follows the 1D Mapper pipeline, and relies on filter functions.
%, and is thus different from the one presented in this paper.

\subsection{The idealized loss function}

\noindent \textbf{Loss function for a Riemannian manifold.}
Our idealized loss function for fuzzy cover learning is 
obtained by taking the expected value of the conditions in \cref{goal:riemannian-case} for a random cover distributed according to the probability measure $\mu_g$ corresponding to a fuzzy cover $g$.

%By integrating the conditions in \cref{goal:riemannian-case} across all suplevel sets, we obtain the following optimization problem, which minimizes over a space of functions into $\Gamma^{k-1}$, rather than over a space of covers.
%This is useful since the space of functions $X \to \Gamma^{k-1}$ is continuous even if $X$ is discrete.

%\luis{Below, summands should be squared? Or use a max instead?}
\smallskip

\begin{tcolorbox}
    \begin{optimizationproblem}
        \label{problem:riemannian-case-fuzzy-cover}
        %Solve:
        %\begin{equation*}
        %    \label{equation:optimization-problem}
        \[
            \mathop{\text{\large min}}\limits_{\substack{g : \Mcal \to \Gamma^{k-1}\\\text{smooth}}}\;\;
            \alpha_{M} \cdot \Small(g) \; +\;
            %\alpha_{I} \cdot \independent(g) \;+\;
            \alpha_{G} \cdot \regular(g) \;+\;
            \alpha_{T} \cdot \topgood(g)
        \]
        \vspace{-0.6cm}
        \begin{align*}
            \Small(g)   & \coloneqq
            \sum_{i=1}^k \;\Ebb\Big( \,\Hcal^d\big(U_i\big)\, \Big)^2                  \\
            \regular(g) & \coloneqq
            \sum_{i = 1}^k\; \Ebb\Big(\, \Hcal^{d-1}\big( \partial U_i \big) \,\Big)^2 \\
            \topgood(g) & \coloneqq
            \Pbb\Big(\left\{U_i\right\}_{i=1}^k \text{ is not homology-good } \Big)
        \end{align*}
        $\{U_i\}_{i=1}^k$ random cover with probability measure $\mu_g$
        %    \begin{itemize}
        %%        \item $\alpha_L,
        %%                  %\alpha_I,
        %%                  \alpha_R,
        %%                  \alpha_T \geq 0$ are application-dependent.
        %    \end{itemize}
    \end{optimizationproblem}
\end{tcolorbox}
\smallskip

%See \cref{section:derivation-objective-function} for a derivation of the formulas in \cref{problem:riemannian-case-fuzzy-cover}.

Next is our main theoretical result, which allows us to minimize the loss terms in \cref{problem:riemannian-case-fuzzy-cover} in practice.
% can be either computed or bounded from above explicitly using the fuzzy cover $g$.

\begin{theorem}
    \label{theorem:main-thm}
    If $\Mcal$ be a compact Riemannian manifold and $g : \Mcal \to \Gamma^{k-1}$ a smooth fuzzy cover, then
    \begin{align*}
        \Small(g)\;   & = \;\sum_{i \in I} \|g_i\|_1^2                                              \\
        \regular(g)\; & =\; \sum_{i \in I}  \|\nabla g_i\|_1^2                                      \\
        \topgood(g)\; & \leq\; \sum_{J \subseteq I} \left\|\, \min_{j \in J} g_j \right\|_{\Hsf}^2,
    \end{align*}
    where $I = \{1, \dots, k\}$.
    Moreover, in the third case, the left-hand side is zero if and only if the right-hand side is.
\end{theorem}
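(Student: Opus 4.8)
The plan is to establish the three identities separately, reducing each to a classical integral–geometric fact. In all three cases the common first step is that, because the cover $\SupLevel{g}{\lambda}$ is obtained by thresholding at a uniformly random $\lambda\in[0,1]$, an expectation (or probability) over the random cover is just an integral over $\lambda\in[0,1]$ of the $\Hcal^d$-measure, the $\Hcal^{d-1}$-measure of the boundary, or an indicator, of a superlevel set. For $\Small(g)$: fixing $i$ and writing $U_i=\SupLevel{g_i}{\lambda}$, we get $\Ebb(\Hcal^d(U_i))=\int_0^1\Hcal^d(\SupLevel{g_i}{\lambda})\,d\lambda$, and exchanging the order of integration (Tonelli, the integrand being nonnegative and jointly measurable) rewrites this as $\int_\Mcal\big(\int_0^1\mathbf 1[g_i(x)>\lambda]\,d\lambda\big)\,d\Hcal^d(x)$; since $g_i$ is $[0,1]$-valued the inner integral equals $g_i(x)$, so $\Ebb(\Hcal^d(U_i))=\int_\Mcal g_i\,d\Hcal^d=\|g_i\|_1$ (the $L^1(\Mcal,\Hcal^d)$-norm). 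Squaring and summing over $i$ gives the first line; this is simply the layer-cake formula.

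For $\regular(g)$: fix $i$. By Sard's theorem the critical values of $g_i$ form a null subset of $\Rbb$, so for a.e.\ $\lambda\in[0,1]$ the value $\lambda$ is regular, $g_i^{-1}(\lambda)$ is a compact embedded $(d{-}1)$-submanifold, and the topological boundary in $\Mcal$ of the open set $\SupLevel{g_i}{\lambda}$ is exactly $g_i^{-1}(\lambda)$ (locally $g_i$ is a submersion near a regular value, so $\{g_i>\lambda\}$ accumulates at every point of the level set while no point off the level set is a boundary point). Hence $\Ebb(\Hcal^{d-1}(\partial U_i))=\int_0^1\Hcal^{d-1}(g_i^{-1}(\lambda))\,d\lambda=\int_\Rbb\Hcal^{d-1}(g_i^{-1}(t))\,dt$ (the range of $g_i$ being contained in $[0,1]$), and the coarea formula identifies the right-hand side with $\int_\Mcal|\nabla g_i|\,d\Hcal^d=\|\nabla g_i\|_1$. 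Squaring and summing yields the second line; measurability of $t\mapsto\Hcal^{d-1}(g_i^{-1}(t))$ and the irrelevance of the null set of critical values are both built into Sard's theorem and the coarea formula.

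For $\topgood(g)$: the key point is that a finite intersection of superlevel sets is the superlevel set of the pointwise minimum, $\bigcap_{j\in J}\SupLevel{g_j}{\lambda}=\SupLevel{\min_{j\in J}g_j}{\lambda}$, so the thresholded cover $\SupLevel{g}{\lambda}$ fails to be homology-good precisely when $\tilde{\Hsf}_*\big(\SupLevel{\min_{j\in J}g_j}{\lambda}\big)\neq 0$ for some nonempty $J\subseteq I$. A union bound over the finitely many $J$ gives
\[
  \topgood(g)\;\le\;\sum_{J\subseteq I}\;\Pbb_\lambda\!\Big(\tilde{\Hsf}_*\big(\SupLevel{\min_{j\in J}g_j}{\lambda}\big)\neq 0\Big),
\]
and each summand is at most $\|\min_{j\in J}g_j\|_{\Hsf}^2$ by construction of $\|\cdot\|_{\Hsf}$, whose square by definition dominates — and vanishes exactly with — the Lebesgue measure of the set of thresholds at which the superlevel set has nonzero reduced homology. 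For the final statement, chase equivalences: $\topgood(g)=0$ iff $\SupLevel{g}{\lambda}$ is homology-good for a.e.\ $\lambda$ iff, for every $J$, the set $\SupLevel{\min_{j\in J}g_j}{\lambda}$ is acyclic (trivial reduced homology) for a.e.\ $\lambda$ — interchanging ``for a.e.\ $\lambda$'' with ``for all $J$'' is harmless since $I$ is finite — iff $\|\min_{j\in J}g_j\|_{\Hsf}=0$ for all $J$ iff the right-hand side is $0$.

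The main obstacle is making the $\topgood$ argument rigorous: one must know that the set of ``bad'' thresholds is Lebesgue measurable so that the probabilities above are well defined, which relies on structural finiteness of the superlevel-set persistence of a smooth function on a compact manifold; and one must verify that the per-$J$ bound $\Pbb_\lambda(\cdots)\le\|\min_{j\in J}g_j\|_{\Hsf}^2$ is precisely the defining inequality for $\|\cdot\|_{\Hsf}$, handling the reduced-homology convention for empty intersections and the degenerate cases $|J|\le 1$. By contrast, the first two identities are immediate once one invokes the layer-cake formula, Sard's theorem, and the coarea formula.
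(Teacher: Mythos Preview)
Your treatment of $\Small(g)$ and $\regular(g)$ is correct and coincides with the paper's: layer-cake/Fubini for the first identity, and Sard's theorem together with the coarea formula for the second (the paper packages these as two short preparatory lemmas, including the observation that $\partial\{f>\lambda\}=\{f=\lambda\}$ at regular values).

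The gap is in the $\topgood(g)$ bound. After the union bound you assert that each term satisfies
\[
\Pbb_\lambda\!\Big(\overline{\Hsf}\big(\SupLevel{\textstyle\min_{j\in J}g_j}{\lambda}\big)\neq 0\Big)\;\le\;\big\|{\textstyle\min_{j\in J}}\,g_j\big\|_{\Hsf}^{2}
\]
``by construction of $\|\cdot\|_{\Hsf}$, whose square by definition dominates \dots''. This is not what the definition gives. From $\|f\|_{\Hsf}=\int_0^1\dim\overline{\Hsf}(\{f>\lambda\})\,d\lambda$ and the trivial pointwise bound $\mathbf 1[\dim>0]\le\dim$ one obtains only $\Pbb_\lambda(\dim>0)\le\|f\|_{\Hsf}$, \emph{without} the square; when $\|f\|_{\Hsf}<1$ squaring makes the right-hand side strictly smaller, and your per-$J$ inequality can fail outright (e.g.\ if $\dim\overline{\Hsf}(\{f>\lambda\})=1$ on an interval of length $\tfrac12$ and $0$ elsewhere, the probability is $\tfrac12$ while $\|f\|_{\Hsf}^{2}=\tfrac14$). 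So the union-bound-then-square route, as written, does not reach the stated estimate. The paper proceeds differently: rather than decoupling into per-$J$ probabilities, it bounds the indicator $\psi_\lambda$ of ``$\SupLevel{g}{\lambda}$ is not homology-good'' \emph{pointwise in $\lambda$} by $\sum_{J}\big(\dim\overline{\Hsf}(\SupLevel{\min_{j\in J} g_j}{\lambda})\big)^{2}$ --- using that if the cover fails then some dimension is $\ge 1$, hence its square is $\ge 1$ --- and only then integrates over $\lambda$, identifying the resulting integrals with the $\|\cdot\|_{\Hsf}^{2}$ terms. Your argument for the ``iff'' clause (finiteness of $I$, equivalence of a.e.\ acyclicity of all intersections with vanishing of each $\|\cdot\|_{\Hsf}$) is fine and matches the paper's reasoning.
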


In the statement of \cref{theorem:main-thm}, if $f : X \to [0,1]$ is a function on a topological space, we define $\|f\|_{\Hsf} = \int_{0}^1 \dim \overline{\Hsf}(\{f > r\})\, \dsf r$, the integral of the dimension of the reduced homology of the suplevel sets of $f$.
This quantity is also known as the length of the barcode or total persistence; see \cref{section:length-of-barcode} for details.
The important point here is that this quantity can be estimated and optimized for in practice thanks to the theory of topological persistence optimization, effectively turning the nerve theorem into a loss function.
% (see \cref{section:topological-optimization}).
See \cref{section:derivation-objective-function} for a proof of \cref{theorem:main-thm}.

\smallskip

\noindent  \textbf{Regularization.}
For theoretical and practical reasons, we add an $L^2$-regularization term to \cref{problem:riemannian-case-fuzzy-cover}, in the form of $\reg(g) \coloneqq \sum_{i \in I} \|\nabla g_i\|^2_2$.
From the theoretical point of view, we believe that this is necessary to guarantee existence of minima; from the practical point of view, this deals with the curse of dimensionality inherent in optimizing over a space of functions on the data.

\smallskip

\noindent \textbf{Homology-good covers up to a dimension.}
In practice, one might be interested in a cover whose nerve recovers the homology of the underlying space only up to dimension $\ell \in \Nbb$.
This can be accomplished using the notion of $\ell$-homology-good cover (\cref{definition:i-homology-good-cover}) and a refined version of the nerve theorem
(\cref{theorem:nerve-refined}).
The loss $\topgood$ can also be relaxed accordingly to get a loss $\topgood_\ell$, which only accounts for homology up to degree $\ell$, and the upper bound of \cref{theorem:main-thm} can also be relaxed in this case; this is described in \cref{section:relaxed-objective}.
The upshot is that, if only homology up to a certain dimension is relevant, then a less computationally intensive topological loss can be used.
%In our implementation, we use the $\ell=0$ relaxation, for simplicity and efficiency.

\section{Practice}
\label{section:implementation}

Our practical approach to cover learning relies on three main considerations:
estimating the loss of \cref{problem:riemannian-case-fuzzy-cover} (\cref{section:loss-estimation}),
finding a suitable parametrization of the space of fuzzy covers (\cref{section:parametric-fuzzy-cover}),
and computing a good initialization (\cref{section:initialization}).
Other practical considerations are required to make optimization feasible and efficient (\cref{section:other-details}).
Building on these, we describe the ShapeDiscover cover learning algorithm in \cref{section:shapediscover}.

\subsection{Loss estimation}
\label{section:loss-estimation}
Let $X$ be the input data (seen as a set), and let $G$ be a weighted graph with $X$ as its set of vertices, and with weight $w(x,y)$ for $(x,y) \in G$ representing relationship strength between $x$ and $y$.
Let $g : X \to [0,1]^k$ (for example, $g$ could be a fuzzy cover).
Based on \cref{theorem:main-thm}, we define the following estimators for the losses:
\allowdisplaybreaks
\begin{align*}
    \widehat{\Small}(g)     & = \eta_M \cdot \sum_{i=1}^k \left(\sum_{x \in X} g_i(x)\right)^2                          \\
    \widehat{\regular}(g)   & = \eta_G \cdot\sum_{i=1}^k \left(\sum_{(x,y) \in G} w(x,y)\cdot |g_i(x)-g_i(y)| \right)^2 \\
    \widehat{\topgood}_0(g) & = \eta_T \cdot\sum_{i=1}^k \, \|g_i\|_{\Hsf_0}^2                                             \\
    \widehat{\reg}(g)       & = \eta_R \cdot\sum_{i=1}^k \sum_{(x,y) \in G} w(x,y)\cdot |g_i(x)-g_i(y)|^2\,,
\end{align*}
where $\eta_M,\eta_G,\eta_T,\eta_R$ are normalization constants (depending only on the graph) for the purpose of having the losses be in similar ranges; see \cref{section:loss-normalization}.

%These estimators are standard.
%We point out that,
For the topological loss, we use the relaxation which only considers $0$-dimensional homology (\cref{section:relaxed-objective}),
since $0$-dimensional homology is more efficient to compute than higher-dimensional homology, and still performs well in the examples (\cref{section:topological-inference-example}).

\subsection{Parametric fuzzy covers}
\label{section:parametric-fuzzy-cover}
In order to make our approach flexible enough to be combined with standard machine learning models, we first describe a way of turning any parametric vector-valued model $f_\theta : X \to \Rbb^k$ on the input data $X$ into one of the form $g_\theta : X \to \Gamma^{k-1}$, thus parametrizing the space of fuzzy covers on the data.
See \cref{section:parametric-models} for possible initial models $f_\theta$, including neural networks and graph neural networks.

\subsubsection{Parametrizing fuzzy covers with softmax}
\label{section:parametrizing-fuzzy-covers-softmax}

Let $p \in [1,\infty]$, and define
\[
    \Delta^{k-1}_p \;=\; \left\{\;x \in [0,1]^k \;:\; \|x\|_p = 1\;\right\},
\]
so that, in particular, $\Delta^{k-1}_\infty = \Gamma^{k-1}$, and $\Delta^{k-1}_1$ is the standard $(k-1)$-dimensional simplex.
A function $h : X \to \Delta_1^{k-1}$ is called a \emph{partition of unity}, since it consists of functions $h_1, \dots, h_k : X \to [0,1]$ such that $\sum_i h_i \equiv 1$.
Note that, by definition of the standard softmax function, we have $\softmax : \Rbb^k \to \Delta_1^{k-1}$.
Thus, given any parametric model $f_\theta : X \to \Rbb^k$, we obtain a parametric partition of unity by simply postcomposing with $\softmax$.

Now, note that for every $p \in [1,\infty]$, there is a function $\pi_{p} : \Delta^{k-1}_1 \to \Delta^{k-1}_p$ given by normalization, i.e., by mapping $x$ to $x/\|x\|_p$.
So if $f_\theta : X \to \Rbb^k$ is a parametric model, we get a parametric fuzzy cover as follows:
\[
    \pi_{\infty}\circ \softmax \circ f_\theta \; :\; X \xrightarrow{\;\;\;\;} \Delta^{k-1}_\infty = \Gamma^{k-1}.
\]

%Given a parametric model $f_\theta : X \to \Rbb^k$, where $\theta$ is the parameter to be optimized, we get a model valued in fuzzy covers as follows:
%\[
%    \pi_{1, \infty}\circ \softmax \circ f_\theta : X \xrightarrow{\;\;} \Delta^{k-1}_\infty = \Gamma^{k-1}.
%\]
%If we do not postcompose with $\pi_{1,\infty}$, we get a model $\softmax \circ f_\theta : X \to \Delta^{k-1}_1$ valued in partitions of unity, which we call \emph{parametric partition of unity} or \emph{parametric POU}, for short.

\subsubsection{Parametric models}
\label{section:parametric-models}
Depending on the type of data $X = \{x_1, \dots, x_n\}$, there are different choices for $f_\theta$:
\begin{itemize}
    \item The simplest option to let $\theta$ be an $n$-by-$k$ matrix, and to set $f_\theta(x_i)_j = \theta_{i,j}$.
    \item If $X \subseteq \Rbb^N$, then we can let $\phi_\theta : \Rbb^N \to \Rbb^k$ be a neural network and set $f_\theta(x_i) = \phi_\theta(x_i)$.
    \item If $X$ consists of the vertices of a graph, then we can let $\phi_\theta : (\Rbb^N)^X \to (\Rbb^k)^X$ be a graph neural network, let $V \in (\Rbb^N)^X$ consist of a data-dependent vector (such as the first $N$ eigenfunctions of the graph Laplacian), and set $f_\theta(x_i) = \phi_\theta(V)(x_i)$.
\end{itemize}

\subsection{Initialization}

\label{section:initialization}

In order to make optimization fast and reliable, it is necessary to use a good initialization.
Any clustering of a set is in particular a cover, and we propose to use a geometrically meaningful clustering (obtained by, e.g., $k$-means or spectral clustering) as initial fuzzy cover.

\subsection{Other implementation details}
\label{section:other-details}

\subsubsection{Smooth parametric fuzzy covers}
As described in \cref{section:parametric-fuzzy-cover}, if $f_\theta : X \to \Rbb^k$ is a parametric vector-valued model, then $\pi_\infty \circ \softmax \circ f_\theta : X \to \Delta^{k-1}_\infty = \Gamma^{k-1}$ is a parametric fuzzy cover.
However, the operation $\pi_\infty$ given by normalization by the $\infty$-norm is not smooth.
We thus propose to replace $\pi_\infty$ with $\pi_p$ for some $p \in [1, \infty)$ during optimization, and to only use $\pi_\infty$ to obtain the final fuzzy cover; see \cref{alg:shapediscover}.
We expect larger values of $k$ to require larger values of $p$, but this approximation does not seem to influence results significantly, and we let $p=5$ in all experiments.

\subsubsection{Topological optimization}
In order to optimize the loss~$\widehat{\topgood}_0$ we use the topological optimization with big steps of \cite{arnur-morozov}; see \cref{section:length-of-barcode}.
%, which, since we only use $0$-dimensional homology, runs in $O(|X| \log |X|)$ per iteration.

\subsection{ShapeDiscover}
\label{section:shapediscover}

See \cref{alg:shapediscover} for a high-level description of the procedure; here we give details about the subroutines.
For details about the software we use see \cref{section:software}, and for computational complexity see \cref{section:computational-complexity}.

%\noindent$\mathrm{NeighborhoodGraph}$.
\subsubsection*{Subroutine $\mathrm{NeighborhoodGraph}$}
The simplest approach is to use a nearest neighbor graph, with weights equal to $1$.
This worked well in most of our experiments.
However, to make the method more robust, our implementation uses the weighted neighborhood graph of the UMAP algorithm (see Section~3.1 of \cite{mcinnes-healy-melville}); this is what we use in the computational examples.

\begin{algorithm}[tb]
    \caption{ShapeDiscover fuzzy cover learning algorithm}
    \label{alg:shapediscover}
    \begin{algorithmic}
        \STATE {\bfseries Input:} Point cloud $X \subseteq \Rbb^N$.
        \STATE {\bfseries Parameters:} $\texttt{n\_cov}\in \Nbb$, $\texttt{n\_neigh} \in \Nbb$, $\texttt{reg} > 0$.
        \STATE {\bfseries Optimization parameters:} $\texttt{lr}$, $\texttt{n\_epoch}$, $p \in [1,\infty)$.
        \STATE $G := \mathrm{NeighborhoodGraph}(X, \texttt{n\_neigh})$
        \STATE $g := \mathrm{FuzzyCoverInitialization}(G, \texttt{n\_cov})$
        \STATE $h := \mathrm{ParametricPartitionOfUnity()}$
        \STATE $\theta' := \mathrm{InitializeParametricModel}\left(h, g\right)$
        \STATE $\Lcal(\theta) \coloneqq \left(\widehat{\Small} + \texttt{reg} \cdot \widehat{\regular} + \widehat{\topgood} + \texttt{reg} \cdot \widehat{\reg}\right)(\pi_p \circ h_\theta)$
        \STATE $\theta := \mathrm{GradientDescent}(\Lcal, \texttt{n\_epoch}, \texttt{lr}, init = \theta')$
        \STATE {\bfseries Return} $\pi_{\infty} \circ h_\theta$.
    \end{algorithmic}
\end{algorithm}

%\noindent $\mathrm{FuzzyCoverInitialization}$.
\subsubsection*{Subroutine $\mathrm{FuzzyCoverInitialization}$}
We use spectral clustering as initialization: We compute the first $\texttt{n\_cov}$ eigenvalues of the normalized Laplacian of~$G$, and apply $k$-means with $k = \texttt{n\_cov}$ to get a clustering of $X$, which we then simply interpret as a fuzzy cover, as described in \cref{section:other-details}.

%\noindent $\mathrm{ParametricPartitionOfUnity}$.
\subsubsection*{Subroutine $\mathrm{ParametricPartitionOfUnity}$}
We use the simplest of the approaches described in \cref{section:parametric-fuzzy-cover}:
we let $\phi$ be a $|X|$-by-$\texttt{n\_cov}$ real matrix, and let $f_\theta : X \to \Rbb^k$ be defined by $f_\theta(x_i)_j = \theta_{i,j}$.
Then, the parametric partition of unity is defined as $\softmax \circ f_\theta : X \to \Delta^{k-1}_1$.

%\noindent $\mathrm{InitializeParametricModel}$.
\subsubsection*{Subroutine $\mathrm{InitializeParametricModel}$}
Since we use a very simple model, initialization is straightforward, and it amounts to setting $\theta'_{i,j} \gets g(x_i)_j$.

%\noindent $\mathrm{GradientDescent}$.
\subsubsection*{Subroutine $\mathrm{GradientDescent}$}
We train for $\texttt{n\_epoch}$ (or convergence) using an Adam optimizer with learning rate $\texttt{lr}$ and default parameters.

\subsection{Output and parameter selection}
\label{section:parameter-selection-output}

We elaborate on how to use ShapeDiscover's output, and on how we choose its parameters in the experiments.
See also \cref{section:other-experiments} for ablation study and parameter and noise sensitivity analyses.

\subsubsection{Output}
\label{section:output}

The output of \cref{alg:shapediscover} is a fuzzy cover.
Using the nerve construction (\cref{definition:nerve}), the fuzzy cover induces a filtered simplicial complex (\cref{section:filtered-simplicial-complex-from-fuzzy-cover}),
indexed by $[0,1]$.
Given any $\lambda \in [0,1]$, the filtered simplicial complex can be thresholded to a single simplicial complex.

\subsubsection{Default parameters}

In all experiments we fix the following default parameters:
number of neighbors for the neighborhood graph $\texttt{n\_neigh} = 15$, regularization parameter $\texttt{reg} = 10$, number of iterations for gradient descent $\texttt{n\_epoch} = 500$, learning rate for gradient descent $\texttt{lr} = 0.1$, and approximation parameter for fuzzy cover $p = 5$ (\cref{section:other-details}).

When a filtered simplicial complex is needed, we reindex the output of \cref{alg:shapediscover} by $-\log$ to have it be indexed by $[0,\infty)$ as standard topological inference algorithms (\cref{section:filtered-simplicial-complex-from-fuzzy-cover}).
When a simplicial complex is needed, we threshold with $\lambda = 0.5 \in [0,1]$, by default.

\section{Experiments}
\label{section:computational-examples}

See appendix for details about
software (\cref{section:software}),
datasets (\cref{section:dataset}), 
computation time (\cref{table:computation-times}),
computational complexity (\cref{section:computational-complexity}),
as well as an ablation study and parameter and noise sensitivity analyses (\cref{section:other-experiments}).

We use ShapeDiscover with default parameters in all experiments (\cref{section:parameter-selection-output}).
We only vary the maximum cover size ($\texttt{n\_cov} \in \Nbb$) and threshold $\lambda \in [0,1]$ (but only in MNIST and C.~elegans data, otherwise default $\lambda = 0.5$ is used).

\subsection{Topological inference}
\label{section:topological-inference-example}

\subsubsection*{A. Efficient topological representation}
We compare ShapeDiscover to the standard approaches to topological inference on four datasets: a synthetic 2-sphere, a synthetic 3-sphere, a sample of the surface of a human (which is topologically equivalent to a 2-sphere), and the dynamical system video data from \cite{lederman-talmon} (which is topologically equivalent to a 2-torus).
We do not use Mapper since current implementations are not suitable for higher-dimensional topological inference (\cref{section:why-not-mapper}).
Each algorithm takes as input a number of vertices, and the goal is to find the smallest number of vertices that achieves a certain homology recovery quotient, quantified as the proportion of filtration values at which the correct homology is recovered (see \cref{section:homology-recovery-quotient} for the definition).
For more details; see \cref{section:efficient-topological-representation-details}.
Results are in \cref{table:topological-inference,table:topological-inference-2}.
ShapeDiscover performs best, followed by Witness ($v=1$).

\subsubsection*{B. Toroidal topology in neuroscience}
In \cref{section:rat-brain-torus}, we describe an experiment on the toroidal topology of population activity in grid cells in the dataset of \cite{gardner-et-al}.
In it, ShapeDiscover is able to learn this topology using a simplicial complex with just $15$ vertices (see \cref{figure:rat-brain}), a task that Vietoris--Rips is not able to accomplish with a simplicial complex with $5000$ vertices.

%\luis{explain that we compare against multiscale ball mapper (implicitly), since the nerve of multiscale ball mapper is equal to the witness complex with $v=0$.
%    Recall from original paper that this is essentially the as Rips (up to a multiplicative $2$-interleaving), and hence doesn't do too well}

\subsection{Visualization of large-scale structure}
\label{section:large-scale-visualization}

For plotting conventions, see \cref{section:plot-convetions}.

\subsubsection*{C. Human and octopus}
We compare ShapeDiscover to 1D Mapper (height as filter function), Ball Mapper, and Differentiable Mapper on human and octopus datasets (\cref{figure:human-octopus-mesh}).
In \cref{figure:nerve-example,figure:octopus}, ShapeDiscover recovers the large-scale topology, including the fact that the figures are hollow, with a small number of vertices, while 1D Mapper and Differentiable Mapper recover the $1$-dimensional structure, but not the higher-dimensional one (\cref{section:cover-learning-algorithms}).
In \cref{figure:ballmapper-human,figure:ballmapper-octopus}, Ball Mapper has a main parameter ($\epsilon$) that is hard to tune, and has a hard time recovering the correct topology.

In the final two examples, we are interested in cover learning without extra information (e.g., filter function), so we experiment with Ball Mapper and Differentiable Mapper.

\subsubsection*{D. Handwritten digit datasets}
We run ShapeDiscover and UMAP on a small handwritten digits dataset and on MNIST.
In \cref{figure:shapediscover-digits,figure:cover-mnist-shapediscover} UMAP and ShapeDiscover capture similar structure, corresponding well to digits and their similarities.
We did not obtain meaningful results with other algorithms (\cref{figure:cover-digits-ballmapper,section:diff-mapper-results}).

\subsubsection*{E. Flare structure in biology}
We use the data from \cite{packer-et-al},
%preprocessed as in \cite{monocle}.
consisting of the gene expressions of cells of the C.~elegans worm, and known to have flare structure corresponding to single-cell trajectories.
We project the data to 2 dimensions using UMAP, which clearly identifies the flare structure of $16$ cell types ($1$--$16$); see \cref{figure:elegans-umap,table:cell-types}.
ShapeDiscover identifies the same flare structure (\cref{figure:shapediscover-elegans}).
We were not able to obtain meaningful results with other algorithms (\cref{figure:elegans-ballmapper,section:diff-mapper-results}).

%\todo{section not finished}
%\luis{why not compare to others: adaptive mapper still requires epsilon, and also the lens function?
%We compare against direct results from other papers, since parameter tuning is still non-trivial for other algorithms}

\section{Conclusions}

Motivated by the problem of representing the large-scale topology of geometric data for the purposes of efficient topological inference and visualization, we have developed a theory for cover learning based on optimization, as well as a practical implementation of these ideas with easy-to-tune parameters.
We demonstrated that cover learning can improve upon the two main TDA methodologies with a single approach, by effectively addressing their main shortcomings: the sometimes prohibitive size of geometric complexes, and the difficulty in tuning and the lack of higher dimensional information in Mapper graphs. 

We believe that cover learning is relevant to unsupervised learning, and to clustering and dimensionality reduction specifically.
This paper is only a first step towards the development of efficient and robust cover learning methods.
%\luis{opens several research directions}
%\luis{witness complexes deserve more attention: fast implementation (``the ripser for witness complexes'')}
We now elaborate
on the limitations of our approach, and give further motivation for future research directions.

%\subsection{Limitations and future work}

\emph{Robust cover learning.}
Covers produced by cover learning algorithms will often contain spurious intersections, which can render visualization difficult, and lead to incorrect topology recovery.
This motivates the developing of robust cover learning, and of reliable and interpretable layout algorithms for cover visualization.

\emph{Consistency.}
Standard topological inference algorithms come with topology recovery guarantees, e.g., \cite{boissonat-chazal-yvinec,kim-shin-chazal-rinaldo-wasserman}, and here we have not provided such guarantees for optimization-based cover learning procedures.
As an avenue for future work, we observe that
the nerve theorem can be used to formulate computable conditions for covers which, when satisfied, guarantee topological correctness of the cover, regardless of how it was obtained.

\emph{Efficiency.}
Topological persistence optimization is slow for a few reasons, including the fact that the whole dataset is required at each iteration step, precluding the usage of mini-batch during optimization.
It is an open problem to find methods for topological regularization 
(i.e., for enforcing trivial topology), which
can be computed in small batches of the data.
It may be possible to develop such methods using probabilistic, sample-based algorithms for topological estimation.

\emph{Other cover learning algorithms.}
Cover learning being a general unsupervised learning problem (like clustering), there is no single, ultimate cover learning algorithm, and other approaches should be considered.
In particular, one could simplify our pipeline, as suggested by the ablation study (\cref{section:other-experiments}).

\emph{Point cloud vectorization.}
Point clouds can be vectorized by constructing a graph on the data and applying methods from graph machine learning.
Cover learning can be used to
%Large point clouds result in large graphs, from which the global geometry is hard to extract (due to high dimensionality of the learning problem).
build graphs on data that are of much smaller size than those built using classical methods such as $k$-nearest neighbors, effectively reducing the dimension of the learning problem.

\emph{Persistence-based clustering.}
A main difficulty with many clustering algorithms is that the final number of clusters needs to be chosen by the user.
This difficulty can be addressed with methods from density-based clustering and persistence-based clustering
\cite{hdbscan,mcinnes2017hdbscan}.
Fuzzy cover learning naturally produces a filtered graph (\cref{section:output}), which can then be used as input to persistence-based clustering methods
such as
\cite{chazal2013persistence,rolle-scoccola}.
%An l-element cover gives rise to a graph with l vertices, and if the cover is good, the connected component of this graph represent the large scale cluster structure of the data, and give rise to a clustering of the data. So the chosen number of cover elements l is just an upper bound for the number of output clusters, and the final clustering (and the number of clusters) depends on the intrinsic geometry of the data, and not on the arbitrarily chosen number l.

%\emph{Dimensionality reduction.} Popular modern dimensionality reduction algorithms such as tSNE and UMAP operate essentially locally, in the sense that they optimize embedding using interactions between few data points (typically two); with these algorithms, the preservation of global structure happens as a by-product of the preservation of global structure. We believe that a local optimization procedure, as above, could be combined with a global one which is tasked with ensuring preservation of global topology, as captured by a small simplicial complex given as the nerve of a cover learned from the data.

\begin{table*}[h!]
    \caption{
        Rows correspond to algorithms taking a number of vertices and outputting a filtered complex.
        Columns correspond to a dataset and a homology recovery quotient.
        Entries are the minimum number of vertices (left) and total number of simplices (right) needed to get the required homology recovery quotient; see \cref{section:efficient-topological-representation-details} for further details.
    }
    \label{table:topological-inference}
    \begin{center}
        \footnotesize
        \csvautotabularcenter{results/topological_inference_sn.csv}
        %\csvautotabularcenter{../../examples/results/topological_inference_sc.csv}
    \end{center}
\end{table*}
\begin{table}[h!]
    \vspace{-0.2cm}
    \caption{We use the number of vertices required by ShapeDiscover to achieve the homology recovery quotients of \cref{table:topological-inference} to run all main algorithms, and report the homology recovery quotient.}
    \label{table:topological-inference-2}
    \begin{center}
        \scriptsize
        \csvautotabularcenter{results/topological_inference_2.csv}
        %\csvautotabularcenter{../../examples/results/topological_inference_sc.csv}
    \end{center}
\end{table}
\begin{figure}[h!]
    \begin{center}
        \includegraphics[width=0.49\linewidth]{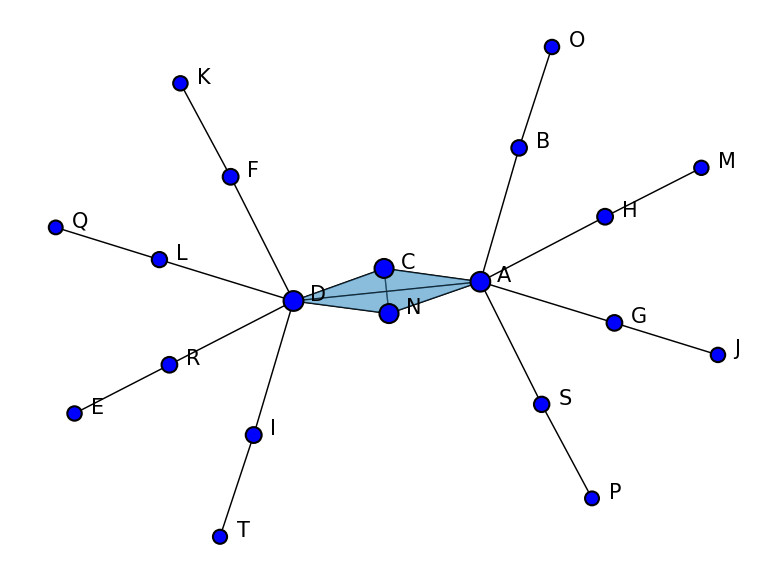}
        \includegraphics[width=0.49\linewidth]{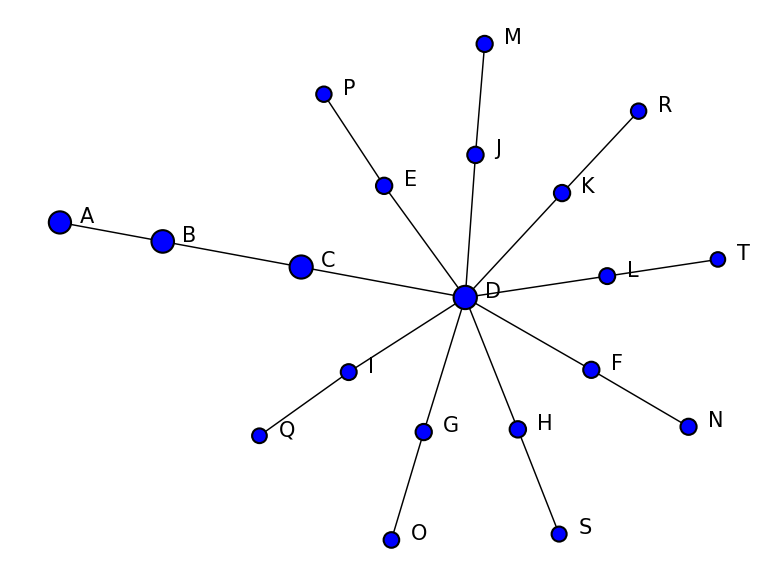}
    \end{center}
    \caption{ShapeDiscover ($\texttt{n\_cov} = 20$) (left) and 1D Mapper (right) on octopus dataset.
        See Figs.~\ref{figure:cover-octopus-shapediscover},\ref{figure:cover-octopus-mapper} for corresponding covers.}
    \label{figure:octopus}
\end{figure}
\begin{figure}[h!]
    \begin{center}
        \raisebox{0.13cm}{\fbox{\includegraphics[width=0.425\linewidth]{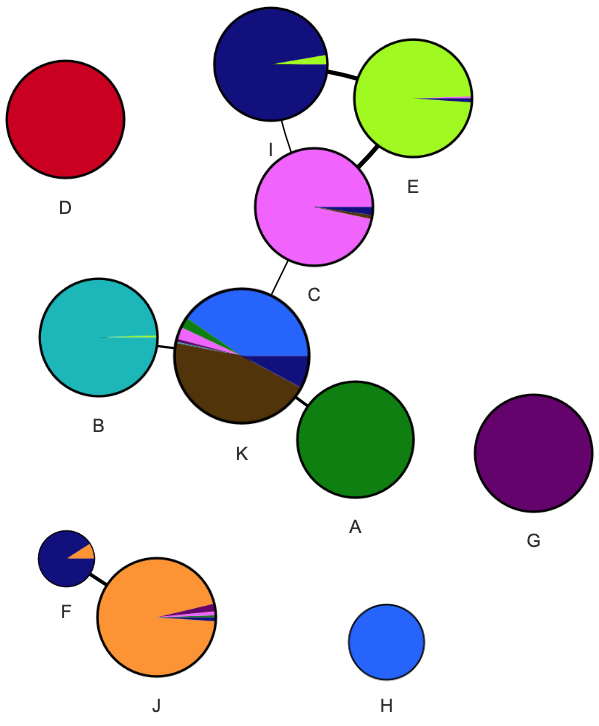}}}
        \hfill
        \includegraphics[width=0.49\linewidth]{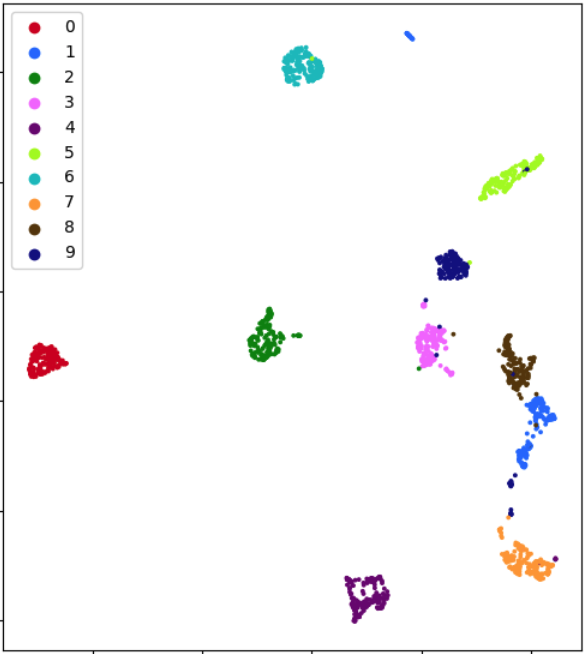}
    \end{center}
    \caption{ShapeDiscover ($\texttt{n\_cov} = 15$) (left) and UMAP (right) on the small digits dataset;
        see \cref{figure:cover-digits-shapediscover} for ShapeDiscover's cover on the UMAP projection.
        We see that both algorithms capture essentially the same structure: most of the digits are clearly distinguished, except for $8$ and $1$.
        Note that only $11$ of the $15$ elements of ShapeDiscover's cover are non-empty.}
    \label{figure:shapediscover-digits}
    \vspace{-0.8cm}
\end{figure}
\begin{figure}[h!]
    \begin{center}
        \raisebox{0.13cm}{\fbox{\includegraphics[width=0.35\linewidth]{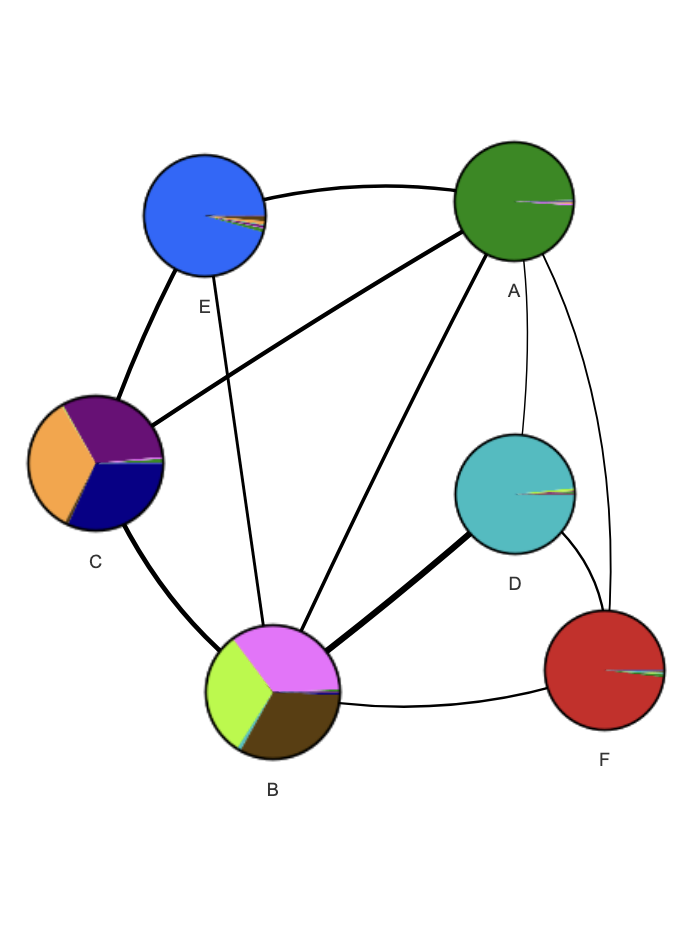}}}
        \hfill
        \includegraphics[width=0.575\linewidth]{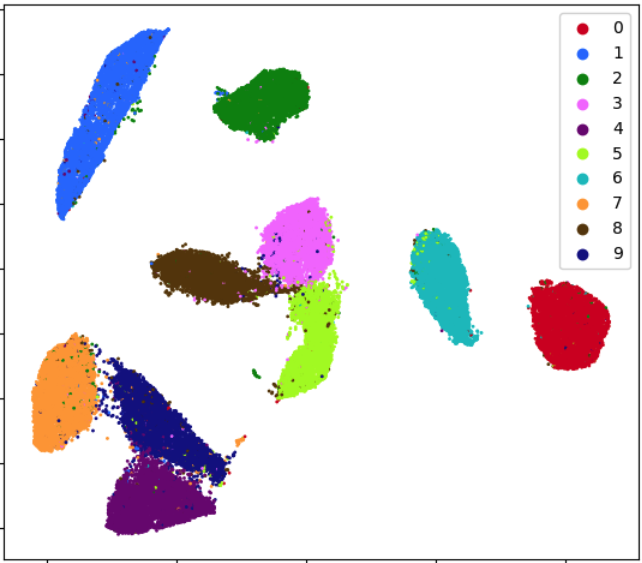}
    \end{center}
    \caption{ShapeDiscover ($\texttt{n\_cov} = 15$, $\lambda = 0.8$) (left) and UMAP (right) on MNIST;
        see \cref{figure:cover-mnist-shapediscover} for ShapeDiscover's cover on the UMAP projection.
        The algorithms capture similar structure; a main difference is that ShapeDiscover isolates a large subset of the digit $1$ (vertex I).
        Note that only $10$ of the $15$ elements of ShapeDiscover's cover are non-empty.}
    \label{figure:shapediscover-digits}
    %\vspace{-0.5cm}
\end{figure}
\begin{figure}[h!]
    \begin{center}
        \fbox{\includegraphics[width=0.8\linewidth]{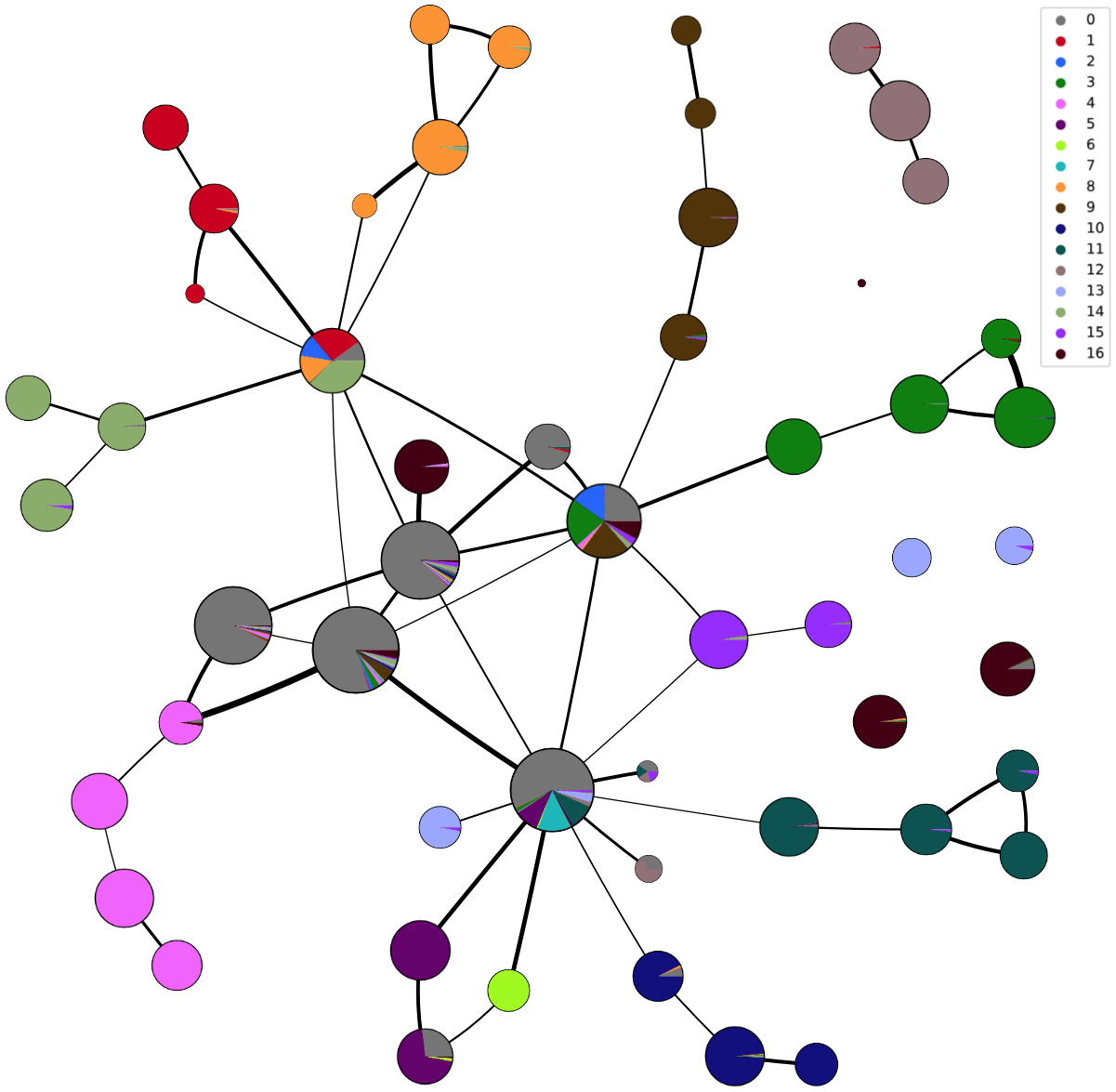}}
    \end{center}
    \caption{ShapeDiscover ($\texttt{n\_cov} = 100$, $\lambda = 0.6$) on C.~elegans data, recovering most of the flare structure captured by UMAP (\cref{figure:elegans-umap,table:cell-types});
    $53$ elements in the cover are non-empty.}
    \label{figure:shapediscover-elegans}
    \vspace{-0.6cm}
\end{figure}

%We rely on off the shelf graph layout algorithms to visualize the nerve of covers, and these visualizations can become cumbersome when the cover contains many intersections.
%Layout algorithms tailored to this type of graphs could be developed.

%\luis{future: find optimal cut automatically?}
%\luis{future: higher homology in topological loss}
%\luis{limitation: number of cover elements}

%\luis{relationship between cover learning and fuzzy clustering}

\newpage

\section*{Impact statement}
This paper presents work whose goal is to advance the field of Machine Learning. There are many potential societal consequences of our work, none which we feel must be specifically highlighted here.

\section*{Acknowledgments}
We thank the anonymous reviewers at ICML 2025 for their helpful and knowledgeable comments.
LS thanks Matt Piekenbrock for discussions about graph layout algorithms and Tatum Rask for help source datasets.
HAH gratefully acknowledges funding from the Royal Society RGF\textbackslash EA\textbackslash 201074, UF150238 and EPSRC EP/Y028872/1 and EP/Z531224/1.
LS was supported by a Centre de Recherches Mathématiques et Institut des Sciences Mathématiques fellowship.
UL was supported by Leverhulme Trust Research Project Grant RPG-2023-144, BMS Centre for Innovation and Translational Research Europe, and Korea Foundation for Advanced Studies.
The authors were members of the UK Centre for Topological Data Analysis EPSRC grant EP/R018472/1.

\bibliography{references}
\bibliographystyle{icml2025}

\newpage

\appendix
\onecolumn

\section{Background}
\label{section:topology-background}

\subsection{Simplicial complexes and nerve}
\label{section:simplicial-complexes}
%In order to encode higher order relations, we use simplicial complexes, which are a generalization of graphs.
A finite \emph{simplicial complex} consists of a finite set $K_0$ of \emph{vertices}, together with a set $K$ of subsets of $K_0$ with the property that $\{x\} \in K$ for every $x \in K_0$, and such that, for all $\sigma \subseteq \tau \in K$, we have $\sigma \in K$.
The elements of $K$ are called \emph{simplices}, the \emph{dimension} of a simplex $\sigma \in K$ is $|\sigma|-1 \in \Nbb$, and the \emph{dimension} of a simplicial complex is the maximum dimension of its simplices.
For example, any finite, unoriented, simple graph $G$ can equivalently be seen as a $1$-dimensional simplicial complex with $0$-dimensional simplices the vertices of $G$, with $1$-dimensional simplices the edges of $G$, and with no higher-dimensional simplices.

\begin{definition}
    \label{definition:nerve}
    Let $\Ucal = \{U_i\}_{i \in I}$ be a cover of a set $X$.
    The \emph{nerve} of $\Ucal$, denoted $\Nerve(\Ucal)$, is the following simplicial complex on $X$.
    The underlying set of $\Nerve(\Ucal)$ is $I$ and the simplices of $\Nerve(\Ucal)$ are sets $\sigma \subseteq I$ such that $\bigcap_{j \in \sigma} U_j \neq \emptyset$.
    The function $\Nerve(\Ucal)_0 \to \parts(X)$ maps $i \in I$ to $U_i$.
\end{definition}

\subsection{Homology, homology-good covers, and nerve theorem}
\label{section:homology}
%\label{section:topologically-good-covers}

\paragraph{Homology.}
It is beyond the scope of this paper to give a formal definition of singular homology of topological spaces;
instead, we recall the basic ideas here, and
%(in the same way that it is beyond the scope of the paper to give a formal definition of Riemannian manifold).
refer the reader to \cite{hatcher,fomenko} for introductions to homology with extensive visual examples from the point of view of mathematics,
to \cite{edelsbrunner-harer,oudot,boissonat-chazal-yvinec,ghrist} for introductions to homology from the point of view of applied topology, and to \cite{hensel-moor-rieck,coskunuzer} for surveys of applications of topology to machine learning.

We fix a field, which we omit from notation.
The \emph{homology} construction takes as input a topological space $X$, and returns a vector space $\Hsf_i(X)$ for each $i \in \Nbb$, in such a way that $\dim(\Hsf_i(X))$ counts the number of $i$-dimensional holes in $X$; for example, $\Hsf_0(X)$ counts connected components, while $\Hsf_1(X)$ counts essential loops.
%and write the $i$th homology vector space of a topological space $X$ with coefficient in the field by $\Hsf_i(X)$.
When referring to all homology groups at once, we write $\Hsf(X)$.

As is standard, the homology $\Hsf(K)$ of a simplicial complex $K$ is taken to be either the homology of its geometric realization or equivalently its simplicial homology.

Homology is functorial, which in particular means that every continuous map between topological spaces $X \to Y$ induces a linear morphism $\Hsf_i(X) \to \Hsf_i(Y)$ of homology vector spaces.

%Two topological spaces $X$ and $Y$ are \emph{homology-equivalent} if their homologies are isomorphic in all degrees, which we write as $\Hsf(X) \cong \Hsf(Y)$.

\paragraph{Reduced homology.}
The $i$th \emph{reduced homology} of a topological space $X$ is the kernel of the linear map $\Hsf(X) \to \Hsf(\ast)$, where $X \to \ast$ is the only map from $X$ to the one-point topological space.

\begin{definition}
    An open cover $\Ucal = \{U_i\}_{i \in I}$ of a topological is \emph{homology-good} we have $\overline{\Hsf}\left(\bigcap_{j \in J} U_j\right) = 0$, for every $J \subseteq I$.
\end{definition}

\paragraph{Nerve theorem.}
The nerve theorem has many incarnations; see, e.g., \cite{bauer-kerber-roll-rolle} for an overview of these.
The version we use is the following:

\begin{theorem}[Nerve theorem]
    \label{theorem:nerve-theorem}
    Let $\Ucal = \{U_i\}_{i \in I}$ be an open cover of a topological space $X$.
    If $\Ucal$ is homology-good, then $\Hsf(X) \cong \Hsf\left(\Nerve(\Ucal)\right)$.
    \qed
\end{theorem}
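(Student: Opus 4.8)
The plan is to deduce the isomorphism from the two spectral sequences of the singular Mayer--Vietoris (\v{C}ech-to-singular) double complex of the cover. Since convergence of these spectral sequences is automatic only when the double complex is bounded, I would first reduce to the case of a \emph{finite} cover---the only case relevant for finite data---and treat infinite $I$ by passing to finite subcovers or invoking numerability. Writing $U_J \coloneqq \bigcap_{j \in J} U_j$ for $\emptyset \neq J \subseteq I$, and letting $C_\bullet$ denote singular chains with coefficients in the fixed field $k$, I form the first-quadrant double complex
\[
    E_{p,q} \;=\; \bigoplus_{J \subseteq I,\; |J| = p+1} C_q(U_J),
\]
with vertical differential the singular boundary and horizontal differential $E_{p,q} \to E_{p-1,q}$ the alternating sum of the inclusion-induced maps $C_q(U_J) \to C_q(U_{J'})$ over the $p$-element subsets $J' \subset J$.

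First I would run the spectral sequence that takes vertical (singular) homology first. Its $E^1$ page is $\bigoplus_{|J|=p+1} \Hsf_q(U_J)$, and here the homology-good hypothesis is exactly what is needed: for every $J$ with $U_J \neq \emptyset$ we have $\overline{\Hsf}(U_J) = 0$, so $U_J$ has the homology of a point and only the row $q = 0$ survives, with $E^1_{p,0} = \bigoplus_{|J| = p+1,\; U_J \neq \emptyset} k$. By \cref{definition:nerve}, this row together with its horizontal differential is precisely the simplicial chain complex of $\Nerve(\Ucal)$. Hence this spectral sequence collapses at $E^2$ and computes $\Hsf_n(\Nerve(\Ucal))$.

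Next I would run the spectral sequence that takes horizontal (\v{C}ech) homology first. The key claim is that for each fixed $q$ the augmented row
\[
    \cdots \to \bigoplus_{|J|=2} C_q(U_J) \to \bigoplus_{i \in I} C_q(U_i) \xrightarrow{\ \varepsilon\ } C_q(X) \to 0
\]
is exact, so that horizontal homology is $C_q(X)$ concentrated in column $p=0$; this spectral sequence then collapses and computes $\Hsf_n(X)$. As both spectral sequences converge to the homology of the same total complex, I would conclude $\Hsf_n(X) \cong \Hsf_n(\Nerve(\Ucal))$ for all $n$, as claimed. Equivalently one can package this as a pair of homology isomorphisms $X \leftarrow B(\Ucal) \to |\Nerve(\Ucal)|$ out of the Mayer--Vietoris blow-up $B(\Ucal)$, but the double complex keeps the two hypotheses cleanly separated.

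The main obstacle is establishing exactness of the augmented row, i.e. the generalized Mayer--Vietoris sequence, and this is precisely where \emph{openness} of the cover is used. The augmentation lands in, and is surjective onto, the subcomplex $C^{\Ucal}_\bullet(X) \subseteq C_\bullet(X)$ of $\Ucal$-small chains, and by the small-simplices theorem the inclusion $C^{\Ucal}_\bullet(X) \hookrightarrow C_\bullet(X)$ is a chain-homotopy equivalence because the $U_i$ form an open cover; an iterated-barycentric-subdivision (or partition-of-unity) argument then furnishes an explicit contracting homotopy for the \v{C}ech complex in each degree $q$. I would isolate this acyclicity as a lemma and regard the spectral-sequence bookkeeping, as well as the identification of the $q=0$ row with the nerve, as routine.
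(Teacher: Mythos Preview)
Your spectral-sequence argument via the \v{C}ech--singular double complex is correct and is one of the standard routes to the nerve theorem; the one wrinkle you flag---that the augmentation only hits $\Ucal$-small chains, with the small-simplices theorem closing the gap---is handled properly. (Your reduction to finite $I$ is harmless but in fact unnecessary: the double complex is first-quadrant, so both spectral sequences converge regardless of the cardinality of $I$.)

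The paper, however, does not prove \cref{theorem:nerve-theorem} at all. It records the result as classical and simply observes that it is a special case of the refined nerve theorem (\cref{theorem:nerve-refined}), for which it in turn cites \cite{gillespie}. So your approach is not so much \emph{different} from the paper's as it is a genuine proof where the paper offers none. Your write-up has the pedagogical advantage of making explicit where each hypothesis is consumed---homology-goodness collapses the vertical-first $E^1$ page to the simplicial chain complex of the nerve, and openness of the cover (via barycentric subdivision) yields exactness of the augmented horizontal rows---whereas the paper treats the statement purely as background and defers entirely to the literature.
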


The above result is standard, and it follows, for example, from the refined version of the nerve theorem (\cref{theorem:nerve-refined}, below).

%\begin{theorem}[\luis{cite Bj\"orner}]
%    \label{theorem:local-to-global-homotopy-reconstruction}
%    Let $\Ucal = \{U_i\}_{i \in I}$ be a finite open cover of a topological space $X$, and let $k \in \Nbb$.
%    Assume that for every $0 \leq t \leq k$ and every subset $J \subseteq I$ of cardinality $k+1$, the intersection $\bigcap_{j \in J} U_j$ is either $(k-t)$-connected, or empty.
%    Then, the $k$-homotopy type of the nerve of $\Ucal$ is equal to that of $X$.
%\end{theorem}

%As a consequence:
%\begin{enumerate}
%    \item A cover with connected cover elements recovers the connected components of the space.
%    \item A cover with simply connected cover elements and connected pairwise intersections recovers the $1$-homotopy type of the space---in particular, the connected components, the fundamental group, and the first cohomology group of the space.
%\end{enumerate}
%
%One can interpret $(1)$ as a local to global principle for clustering, and $(2)$ and \Cref{theorem:local-to-global-homotopy-reconstruction} as a local to global principle for homotopy reconstruction.

\paragraph{Homology recovery up to dimension $\ell$.}
In many applications of algebraic topology, only homology up to a certain dimension is relevant.
For example, if one is interested in the connected components of a topological space, as is the case in clustering, only $\Hsf_0$ is required.
A general version of the nerve theorem, \cref{theorem:nerve-refined} below, gives condition for recovery of homology only up to a certain dimension.
We now abstract these conditions.

\begin{definition}
    \label{definition:i-homology-good-cover}
    Let $\Ucal = \{U_i\}_{i \in I}$ be an open cover of a topological space $X$, and let $k \in \Nbb$.
    The cover $\Ucal$ is \emph{$\ell$-homology-good} if $\overline{\Hsf}_m\left(\bigcap_{j \in J} U_j\right) = 0$ for every $J \subseteq I$ with $|J| \leq \ell+1$ and $0 \leq m \leq \ell - |J| - 1$.
\end{definition}

We can now state the refined nerve theorem (for a proof, see, e.g., \cite{gillespie}).

\begin{theorem}
    \label{theorem:nerve-refined}
    Let $\Ucal$ be an open cover of a topological space $X$, and let $k \in \Nbb$.
    If $\Ucal$ is $k$-homology-good, then $\Hsf_i(X) \cong \Hsf_i\left(\Nerve(\Ucal)\right)$ for every $0 \leq i \leq k$.
\end{theorem}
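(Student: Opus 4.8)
The plan is to prove \cref{theorem:nerve-refined} via the Mayer--Vietoris (\v{C}ech-to-singular) spectral sequence of the open cover $\Ucal = \{U_i\}_{i\in I}$. For a nonempty finite $J \subseteq I$ write $U_J := \bigcap_{j\in J} U_j$. First I would assemble the double complex $C_{p,q} := \bigoplus_{|J| = p+1} C_q(U_J)$, where $C_q(-)$ denotes singular $q$-chains over the fixed field, with the singular boundary in the $q$-direction and the alternating-sum-of-inclusions (\v{C}ech) differential in the $p$-direction. The analytic heart is to show that the total complex computes $\Hsf_*(X)$: this is the statement that the augmented complex $\cdots \to \bigoplus_{|J|=2} C_q(U_J) \to \bigoplus_{i} C_q(U_i) \to C_q(X) \to 0$ is acyclic, which follows from the small-simplices theorem (every singular chain of $X$ is homologous to a sum of simplices each lying in some $U_i$). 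Granting this, filtering by the $p$-degree yields a spectral sequence with $E^1_{p,q} = \bigoplus_{|J|=p+1}\Hsf_q(U_J)$ converging to $\Hsf_{p+q}(X)$.

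Next I would identify the $q=0$ row. The $d^1$ differential on $E^1_{\bullet,0}$ is precisely the simplicial (\v{C}ech) boundary on $\bigoplus_{|J|=p+1}\Hsf_0(U_J)$. Using the connectedness part of the $k$-homology-good hypothesis (the case $m=0$ of \cref{definition:i-homology-good-cover}, which forces $\overline{\Hsf}_0(U_J)=0$, hence $\Hsf_0(U_J)$ one-dimensional, for the relevant $J$), this row becomes, in the relevant range of $p$, the simplicial chain complex of $\Nerve(\Ucal)$, so that $E^2_{p,0} \cong \Hsf_p(\Nerve(\Ucal))$ there. The remaining conditions (the cases $m \ge 1$) say exactly that the higher-$q$ entries $E^1_{p,q}$ vanish in the band of bidegrees influencing total degree $\le k$.

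With these vanishings, the final step is a degree-bookkeeping argument: I would verify that in total degrees $\le k$ the only surviving $E^\infty$ contributions come from the $q=0$ row, and that no nonzero higher differential $d^r$ ($r\ge 2$) can enter or leave $E_{p,0}$ for $p \le k$, because each such differential has as source or target an entry already killed by the good-cover hypothesis. Hence the spectral sequence degenerates in the relevant range, $E^\infty_{p,0} = E^2_{p,0} = \Hsf_p(\Nerve(\Ucal))$ for $p \le k$, and the edge maps assemble these into the claimed isomorphism $\Hsf_i(X) \cong \Hsf_i(\Nerve(\Ucal))$ for $0 \le i \le k$. The full \cref{theorem:nerve-theorem} is the case $k = \infty$, where every row but $q=0$ vanishes and the sequence collapses completely.

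I expect the main obstacle to be twofold. The genuinely nontrivial analytic input is the convergence claim, i.e.\ acyclicity of the \v{C}ech complex of singular chains, where barycentric subdivision does the real work and which uses no goodness of the cover at all. The more error-prone part is the combinatorial bookkeeping matching the exact index range $0 \le m \le k - |J| - 1$ of \cref{definition:i-homology-good-cover} to the precise band of $(p,q)$ in which $E^1_{p,q}$ must vanish so that both the incoming and outgoing higher differentials on the $q=0$ row die in total degree $\le k$. A conceptually cleaner but equivalent route, sidestepping some of this bookkeeping at the space level, is to replace the spectral sequence by the Mayer--Vietoris blow-up (homotopy colimit) $\mathrm{Bl}(\Ucal)$, prove that its projection to $X$ is a homology isomorphism for any open cover (again via subdivision), and then show, through the filtration by skeleta of $\Nerve(\Ucal)$, that its collapse map to $|\Nerve(\Ucal)|$ induces an isomorphism on $\Hsf_i$ for $i \le k$ under the $k$-homology-good hypothesis.
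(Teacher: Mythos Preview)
The paper does not actually prove \cref{theorem:nerve-refined}; it simply states the result and defers to a reference (``for a proof, see, e.g., \cite{gillespie}''). So there is no proof in the paper to compare against.

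That said, your proposal is the standard and correct route, and almost certainly what the cited reference does: set up the \v{C}ech--singular double complex, use barycentric subdivision (small simplices) to show the total complex computes $\Hsf_*(X)$, filter by \v{C}ech degree to get the spectral sequence $E^1_{p,q} = \bigoplus_{|J|=p+1}\Hsf_q(U_J) \Rightarrow \Hsf_{p+q}(X)$, and then use the good-cover hypothesis to collapse the spectral sequence onto the $q=0$ row in the relevant range. Your alternative via the Mayer--Vietoris blow-up/homotopy colimit is equally standard and, as you note, cleaner for the bookkeeping.

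One caution you already anticipated: the index range in \cref{definition:i-homology-good-cover} as written ($0 \le m \le \ell - |J| - 1$, $|J| \le \ell+1$) is exactly the kind of thing that needs to be checked against the precise vanishing region required in the $E^1$ page; when you write this up in full, you will want to verify that the paper's stated bounds actually suffice to kill both the entries $E^1_{p,q}$ with $p+q \le k$, $q \ge 1$, and the targets of the outgoing differentials $d^r\colon E^r_{p,0} \to E^r_{p-r,r-1}$ for $p \le k$, and that the connectedness ($m=0$) condition covers enough $J$ to identify $E^2_{p,0}$ with $\Hsf_p(\Nerve(\Ucal))$ for $p \le k$. If the bounds as written turn out to be slightly off, that is an issue with the definition in the paper rather than with your argument.
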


\subsection{Topological persistence}
\label{section:persistent-homology}
For references, see \cite{edelsbrunner-harer,oudot,boissonat-chazal-yvinec,ghrist}.

\subsubsection{Filtered simplicial complexes}

\begin{definition}
    \label{definition:filtered-simplicial-complex}
    A \emph{filtered simplicial complex} $(K,f)$ consists of a simplicial complex $K$ together with a function $f : K \to \Rbb$ from the simplices of $K$ to the real numbers, with the property that if $\sigma \subseteq \tau$ are simplices of $K$, then $f(\sigma) \leq f(\tau)$.
\end{definition}

A filtered simplicial complex $(K,f)$ can equivalently be described as a sequence $\{K_r\}_{r \in \Rbb}$ of simplicial complexes with the property that $K_r \subseteq K_s$ whenever $r \leq s$.
To see this, we define $K_r = \{ \sigma \in K : f(\sigma) \leq r \}\subseteq K$.

\subsubsection{Other indexing posets}
We remark that geometric complexes used in topological inference, such as the Vietoris--Rips complex (\cref{section:topological-inference}) are often indexed by the non-negative real numbers.
Another indexing poset of interest is $([0,1], \geq)$, which indexes the filtered simplicial complexes associated to fuzzy covers; see \cref{section:filtered-simplicial-complex-from-fuzzy-cover}.

\subsubsection{The barcode}
Applying homology in dimension $i$ to a filtered simplicial $(K,f)$ complex one gets a family of vector spaces $\{\Hsf_i(K_r)\}_{r \in \Rbb}$, and, by the functorial property of homology, we get linear maps $\Hsf_i(K_r) \to \Hsf_i(K_s)$ whenever $r \leq s \in \Rbb$.
This collection of vector spaces together with linear maps, often denoted as $\Hsf_i(K,f)$, is known as a \emph{persistence module}, and a fundamental result in persistence theory says that this persistence module can be described completely by a multiset of intervals in the real line, known as the \emph{barcode} of the filtered simplicial complex, and often denoted $\Bcal_i(K,f) = \{[a_{\ell},b_{\ell}) \subseteq \Rbb\}_{\ell \in L}$; see, e.g., \cite{ghrist}.

The same considerations hold if one applies reduced homology instead of homology, in the sense that there exists a reduced barcode $\overline{\Bcal_i}(K,f)$, in the form of a multiset of intervals of $\Rbb$ which completely describes the persistence module obtained by applying reduced homology to a filtered simplicial complex $(K,f)$.

\subsubsection{The length of the barcode}
\label{section:length-of-barcode}
If the $i$th dimensional barcode $\Bcal_i(K,f)$ of a filtered simplicial complex $(K,f)$ is given by a multiset of intervals $\{[a_{\ell},b_{\ell}) \subseteq \Rbb\}_{\ell \in L}$, then its \emph{length} (or \emph{total persistence}) is given by the sum of the length of the intervals
$|\Bcal_i(K,f)| = \sum_{\ell \in L} (b_\ell - a_\ell)$.

When the simplicial complex is filtered by a bounded function, say $f : K \to [0,1]$, then the length is a finite number $|\Bcal_i(K,f)| \in \Rbb$.
Remarkably, the length of the barcode can be differentiated with respect to the filter function $f$ \cite{Carriere2021a}, and this can be done efficiently using the methods in \cite{arnur-morozov,carriere-theveneau-lacombe}.

It is a basic observation that the length of the barcode can be computed by integrating the dimension of the homology, that is: $|\Bcal_i(K,f)| = \int_{-\infty}^\infty \dim \Hsf_i(K_r) \, \dsf r$.
As one might expect, if one integrates the dimension of the reduced homology, then one obtains the length of the reduced barcode.

%\luis{mention that because of $\Hsf_0$ we actually technically use reduced persistent homology}
%\luis{barcode}

%\luis{length of the barcode, and its applications in topological function theory}

\subsection{Topological inference and sparsification}
\label{section:topological-inference}

Topological inference in a central topic in TDA; as its name suggests, the basic goal is to infer, from finite samples, topological properties (such as number of connected components or essential loops) of an underlying topological space.
We refer the reader to books such as \cite{edelsbrunner-harer,boissonat-chazal-yvinec,oudot} for an introduction.
Arguably the most common construction for topological inference is the Vietoris--Rips complex, a particular kind of geometric complex that assigns, to any finite metric space $X$ (such as a point cloud) a filtered simplicial complex (i.e., a nested sequence of simplicial complexes, see \cref{definition:filtered-simplicial-complex}) $\{\mathsf{VR}_\epsilon(X)\}_{\epsilon \geq 0}$ with vertices the elements of $X$, and a simplex on vertices $x_1, \dots, x_k$ at scale $\epsilon$ if $d_X(x_i,x_j) \leq \epsilon$ for every $i,j \in \{1, \dots, k\}$.

The number of simplices in the Vietoris--Rips complex grows extremely fast with the size of $X$, as it contains $\Theta(|X|^{d+1})$ $d$-dimensional simplices.
For this reason, several sparsification techniques are commonly used, in particular: constructing the complex on a subsample of the data, reducing the complex using discrete Morse theory \cite{mischaikow-nanda} and edge collapses \cite{boissonat-pritam}, and approximations \cite{sheehy}.

\subsection{Topological persistence optimization}
\label{section:topological-optimization}

There is extensive literature on the optimization of losses based on topology \cite{Carriere2021a, leygonie, leygonie-2,hensel-moor-rieck},
with applications such as classification \cite{hofer-4,poulenard,hofer-2,horn,zhao-ye-chen-wang} representation learning \cite{hofer}, and regularization \cite{chen,hofer-3,scoccola,moor2020topological}.
Of particular importance to this paper is the idea of topological optimization with big steps introduced in \cite{arnur-morozov}, which addresses the sparsity of gradient in vanilla topological persistence optimization, making gradient descent convergence significantly faster than previous approaches (for another, recent approach see \cite{carriere-theveneau-lacombe}).
For more details, see \cref{section:persistent-homology}.

%\subsection{Other}
%
%\luis{Simplicial complex learning (but not covering): ClusterGraph}
%\luis{Charting. But does not really look at the cover and its nerve}
%\luis{Previous examples of ML using topology for regularization}
%\luis{Fuzzy clustering. But also does not look at the cover and its nerve}
%\luis{Community detection and fuzzy communities?}

\section{Details about theory}

\subsection{Computation of the loss function}
\label{section:derivation-objective-function}

\begin{lemma}
    \label{lemma:smallness-derivation}
    Let $f : \Mcal \to [0,1]$ be a smooth function on a $d$-dimensional Riemannian manifold $\Mcal$.
    Then,
    \[
        \int_{0}^1 \Hcal^d\big(
        \{f > \lambda\}
        \big) \; \dsf \lambda
        \;\; = \;\;
        \|f\|_1.
    \]
\end{lemma}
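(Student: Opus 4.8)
The plan is to recognize this identity as an instance of the ``layer-cake'' representation combined with Tonelli's theorem. First I would observe that, since $f$ is smooth and hence continuous, each superlevel set $\{f > \lambda\}$ is open, so it is $\Hcal^d$-measurable, and the map $(x,\lambda) \mapsto \chi_{\{f > \lambda\}}(x)$ is jointly measurable on $\Mcal \times [0,1]$. I would also use the standard fact that, on a $d$-dimensional Riemannian manifold, the $d$-dimensional Hausdorff measure $\Hcal^d$ agrees (with the usual normalization) with the Riemannian volume measure; in particular, $\Hcal^d$ is $\sigma$-finite on $\Mcal$ (and finite when $\Mcal$ is compact), which is exactly what is needed to apply Tonelli's theorem.

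The computation then proceeds as follows:
\[
    \int_{0}^1 \Hcal^d\big(\{f > \lambda\}\big)\, \dsf\lambda
    \;=\; \int_{0}^1 \int_\Mcal \chi_{\{f > \lambda\}}(x)\, \dsf\Hcal^d(x)\, \dsf\lambda
    \;=\; \int_\Mcal \int_{0}^1 \chi_{\{f > \lambda\}}(x)\, \dsf\lambda\, \dsf\Hcal^d(x),
\]
where the second equality is Tonelli's theorem, valid since the integrand is nonnegative and measurable. For fixed $x \in \Mcal$, because $f(x) \in [0,1]$, the inner integral equals the Lebesgue measure of the interval $[0, f(x))$, namely $f(x)$. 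Hence the last expression equals $\int_\Mcal f(x)\, \dsf\Hcal^d(x) = \|f\|_1$, using $f \geq 0$.

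I do not expect a genuine obstacle here: the only point requiring care is the interchange of the two integrals, and this is immediate from Tonelli's theorem once $\sigma$-finiteness of $\Hcal^d$ on $\Mcal$ and joint measurability of $(x,\lambda) \mapsto \chi_{\{f>\lambda\}}(x)$ are noted. Alternatively, one could bypass the joint-measurability bookkeeping entirely by invoking the standard layer-cake formula $\int_\Mcal (\psi \circ f)\, \dsf\Hcal^d = \int_0^\infty \psi'(\lambda)\, \Hcal^d(\{f > \lambda\})\, \dsf\lambda$ for nonnegative measurable $f$, applied with $\psi(t) = t$ (so that $\psi' \equiv 1$) together with the fact that $f$ takes values in $[0,1]$, so that the range of integration reduces to $[0,1]$.
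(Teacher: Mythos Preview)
Your proof is correct and follows essentially the same approach as the paper: both rewrite $\Hcal^d(\{f>\lambda\})$ as an integral of the indicator, swap the order of integration (you invoke Tonelli, the paper says Fubini), and evaluate the inner integral as $f(x)$. Your version is slightly more careful about the hypotheses justifying the interchange, but the argument is otherwise identical.
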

\begin{proof}
    We simply compute
    \begin{align*}
        \int_{0}^1 \Hcal^d\big(
        \{f > \lambda\}
        \big) \; \dsf \lambda
         & =
        \int_{\lambda \in [0,1]}
        \int_{x \in \Mcal}
        \mathbf{1}_{\{f > \lambda\}}(x)\;
        \dsf x\;
        \dsf \lambda \\
         & =
        \int_{x \in \Mcal}
        \int_{\lambda \in [0,1]}
        \mathbf{1}_{\{f > \lambda\}}(x)\;
        \dsf \lambda\;
        \dsf x       \\
         & =
        \int_{x \in \Mcal}
        f(x)\;
        \dsf x
        = \|f\|_1,
    \end{align*}
    where in the second equality we used Fubini's theorem.
\end{proof}

\begin{lemma}
    \label{lemma:technical-result-regularity-loss}
    Let $f : \Mcal \to \Rbb$ be a smooth function on a Riemannian manifold.
    If $\nabla f$ does not vanish on $\{f = \lambda\}$, then $\partial \{f > \lambda\} = \{f = \lambda\}$.
\end{lemma}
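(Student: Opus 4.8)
The plan is to unwind the definition of the topological boundary and split the claim into two inclusions: one that follows purely from continuity of $f$, and one into which the hypothesis on $\nabla f$ enters. Throughout, write $\SupLevel{f}{\lambda}$ for the superlevel set $\{x \in \Mcal : f(x) > \lambda\}$.

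First I would record the easy inclusion. Since $f$ is continuous, $\SupLevel{f}{\lambda}$ is open, so it coincides with its interior, and therefore $\partial\SupLevel{f}{\lambda} = \overline{\SupLevel{f}{\lambda}} \setminus \SupLevel{f}{\lambda}$. Again by continuity, the closed set $\{f \geq \lambda\}$ contains $\SupLevel{f}{\lambda}$, hence contains $\overline{\SupLevel{f}{\lambda}}$, so
\[
    \partial\SupLevel{f}{\lambda} \;\subseteq\; \{f \geq \lambda\} \setminus \{f > \lambda\} \;=\; \{f = \lambda\}.
\]
This step uses nothing about the gradient.

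For the reverse inclusion I would show $\{f = \lambda\} \subseteq \overline{\SupLevel{f}{\lambda}}$; since $\{f = \lambda\}$ is disjoint from the open set $\SupLevel{f}{\lambda}$, this places $\{f = \lambda\}$ inside $\overline{\SupLevel{f}{\lambda}} \setminus \SupLevel{f}{\lambda} = \partial\SupLevel{f}{\lambda}$, completing the proof. Fix $x$ with $f(x) = \lambda$. By hypothesis $\nabla f(x) \neq 0$, so I would take the geodesic $\gamma(t) = \exp_x\!\big(t\,\nabla f(x)\big)$, defined for $t$ in a neighbourhood of $0$, and compute $\tfrac{d}{dt}\big|_{t=0} f(\gamma(t)) = \langle \nabla f(x), \gamma'(0)\rangle = \|\nabla f(x)\|^2 > 0$. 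Hence $f(\gamma(t)) > \lambda$ for all sufficiently small $t > 0$, and since $\gamma(t) \to x$ as $t \to 0^+$, the point $x$ is a limit of points of $\SupLevel{f}{\lambda}$, i.e.\ $x \in \overline{\SupLevel{f}{\lambda}}$.

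The argument is elementary, and the only place the hypothesis is genuinely used is in exhibiting a direction of strict increase of $f$ at a point of the level set. The one mild subtlety — and the step I would be most careful about — is ensuring that the curve along which $f$ increases remains in $\Mcal$, which is why I use the exponential map rather than a naive straight-line perturbation; equivalently, one could invoke the implicit function theorem to see that near such an $x$ the set $\{f = \lambda\}$ is locally a smooth hypersurface separating $\{f > \lambda\}$ from $\{f < \lambda\}$, but the gradient-flow argument is shorter and chart-free.
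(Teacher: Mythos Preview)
Your proof is correct and follows essentially the same decomposition as the paper: the same continuity argument for $\partial\{f>\lambda\}\subseteq\{f=\lambda\}$, and the same reduction of the reverse inclusion to showing $\{f=\lambda\}\subseteq\overline{\{f>\lambda\}}$. The only difference is in that final step, where the paper argues by contradiction (if some neighbourhood of $x$ lay in $\{f\leq\lambda\}$ then $x$ would be a local maximum, forcing $\nabla f(x)=0$) rather than constructing an explicit curve via the exponential map; the paper's version is marginally more elementary since it avoids any Riemannian machinery beyond the vanishing-gradient criterion for local extrema, but both are valid one-line justifications of the same fact.
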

\begin{proof}
    The left-to-right inclusion follows immediately from the fact that $f$ is continuous:
    \[
        \partial \{f > \lambda\} = \overline{\{f > \lambda\}} \setminus \{f > \lambda\} \subseteq \{f \geq \lambda\} \setminus \{f > \lambda\} = \{f = \lambda\}.
    \]

    For the right-to-left inclusion, it suffices to prove that
    $\{f = \lambda\} \subseteq \overline{\{f > \lambda\}}$.
    So, given $x \in \{f = \lambda\}$ we must show that every open neighborhood $U$ of $x$ intersects $\{f > \lambda\}$.
    Towards a contradiction, assume that there exists an open neighborhood $U$ of $x$ entirely contained in $\{f \leq \lambda\}$.
    Then $x$ is a maximum of $f$ restricted to $U$, and thus $\nabla f$ vanishes on $x$, a contradiction.
    % https://math.stackexchange.com/questions/4807568/given-nonvanishing-gradient-determine-if-bdy-fa-f-a?noredirect=1&lq=1
\end{proof}

\begin{lemma}
    \label{lemma:regularity-derivation}
    If $f : \Mcal \to \Rbb$ is a smooth function on a Riemannian manifold, then
    \[
        \int_0^1 \;\Hcal^{d-1}\big(\, \partial \SupLevel{f}{\lambda}\, \big)\; \dsf \lambda \;=\; \|\nabla f\|_1\,.
    \]
\end{lemma}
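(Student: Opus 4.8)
The plan is to recognize this identity as an instance of the smooth coarea formula. Writing $\|\nabla f\|_1 = \int_{\Mcal} |\nabla f(x)|\,\dsf x$, the coarea formula applied to the smooth map $f$ (with test function identically $1$) gives
\[
\int_{\Mcal} |\nabla f(x)|\,\dsf x \;=\; \int_{\Rbb} \Hcal^{d-1}\big(f^{-1}(t)\big)\,\dsf t .
\]
Since $f$ is valued in $[0,1]$ (the case of interest, consistently with \cref{lemma:smallness-derivation}; for a general smooth $f$ one would integrate over the whole range $f(\Mcal)$ on both sides, using compactness of $\Mcal$), the right-hand integral is supported on $[0,1]$ and equals $\int_0^1 \Hcal^{d-1}(\{f = t\})\,\dsf t$. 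So after invoking the coarea formula, the problem reduces to replacing, for almost every $\lambda$, the level set $\{f=\lambda\}$ by the topological boundary $\partial\SupLevel{f}{\lambda}$.

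For that reduction I would invoke Sard's theorem: because $f$ is smooth, its set of critical values is Lebesgue-null in $\Rbb$. For any regular value $\lambda$, the gradient $\nabla f$ is nonvanishing on $\{f=\lambda\}$, so \cref{lemma:technical-result-regularity-loss} applies and yields $\partial\SupLevel{f}{\lambda} = \{f=\lambda\}$. Hence the two integrands $\lambda \mapsto \Hcal^{d-1}\big(\partial\SupLevel{f}{\lambda}\big)$ and $\lambda \mapsto \Hcal^{d-1}(\{f=\lambda\})$ agree off a Lebesgue-null set of parameters, and integrating over $[0,1]$ closes the argument:
\[
\int_0^1 \Hcal^{d-1}\big(\partial\SupLevel{f}{\lambda}\big)\,\dsf\lambda \;=\; \int_0^1 \Hcal^{d-1}(\{f=\lambda\})\,\dsf\lambda \;=\; \|\nabla f\|_1 .
\]
This is structurally parallel to \cref{lemma:smallness-derivation}, with the Fubini step there replaced by the coarea formula here, the point being that $\Hcal^{d-1}\big(\partial\SupLevel{f}{\lambda}\big)$ is a genuine codimension-one measure rather than the $\Mcal$-integral of an indicator, so Fubini alone does not suffice.

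The points requiring care are bookkeeping rather than conceptual. First, one must be entitled to apply the coarea formula on a Riemannian manifold: it holds for Lipschitz, a fortiori smooth, maps, and produces a measurable, almost-everywhere finite function $t \mapsto \Hcal^{d-1}(f^{-1}(t))$, the finiteness following from compactness of $\Mcal$ together with the fact that $f^{-1}(t)$ is a compact embedded hypersurface for regular $t$. Second, one needs the reduction of the domain of integration to $[0,1]$, which is exactly where the hypothesis that $f$ is $[0,1]$-valued (or the compactness of $\Mcal$) enters. I expect the whole proof to be the assembly of the coarea formula, Sard's theorem, and \cref{lemma:technical-result-regularity-loss}, with no further obstacle.
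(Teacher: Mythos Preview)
Your proposal is correct and matches the paper's proof essentially step for step: the paper also reduces to showing $\partial\SupLevel{f}{\lambda}=\{f=\lambda\}$ for almost every $\lambda$ via Sard's theorem and \cref{lemma:technical-result-regularity-loss}, and then invokes the coarea formula to identify $\int_0^1 \Hcal^{d-1}(\{f=\lambda\})\,\dsf\lambda$ with $\|\nabla f\|_1$. Your write-up is more explicit about the bookkeeping (measurability, reduction to $[0,1]$, compactness), but there is no methodological difference.
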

\begin{proof}
    We claim that $\partial \SupLevel{f}{\lambda} = \{f = \lambda\}$ for almost every $\lambda \in \Rbb$;
    the statement of the lemma follows from this claim and the coarea formula,
    which in this case says that $\int_0^1 \Hcal^{d-1}\big( \{f = \lambda\}  \big) \dsf \lambda = \|\nabla f\|_1$ (see, e.g., Appendix~A of \cite{jost-jost}, for a proof of the coarea formula).
    So let us prove the claim.
    By Saard's theorem, the set of critical values of $f$ has measure zero.
    This implies that $\nabla f$ does not vanish on almost every level set $\{f = \lambda\}$, which implies our claim, thanks to \cref{lemma:technical-result-regularity-loss}.
\end{proof}

%\begin{lemma}
%    \label{proposition:topology-correc-derivation}
%    Let $g : \Mcal \to \Gamma^{k-1}$ be continuous and let $\lambda \in [0,1]$.
%    If $\dim \overline{H}( \SupLevel{\min_{j \in J} g_j}{\lambda}) = 0$ for every $J \subseteq I$, then $\Nerve(\SupLevel{g}{\lambda})$ and $\Mcal$ have the same homology.
%    %    If $
%    %\[
%    %    \SupLevelTopCorrect^g(\lambda) \; \dsf \lambda
%    %    \;\; \leq \;\;
%    %    \sum_{J \subseteq I} \big\| \dim\mathsf{PH}\big( \cap_{j \in J}\, \SupLevel{g_j}{} \big) \big\|_1
%    %\]
%\end{lemma}
%\begin{proof}
%    This follows directly from the nerve theorem for homology \luis{reference}.
%\end{proof}

\begin{proof}[Proof of \cref{theorem:main-thm}]
    Let $\phi$ be a function which takes as input a cover and outputs a real number.
    Then, if a random cover $\Ucal$ is distributed according to the probability measure $\mu_g$, we have that $\Ebb(\phi(\Ucal)) = \int_0^1 \phi(\SupLevel{g}{\lambda})\, \dsf \lambda$.
    This is simply because $\mu_g$ is the pushforward of the uniform measure on $[0,1]$ along the function $\lambda \mapsto \SupLevel{g}{\lambda}$.

    Then, the first equality follows from \cref{lemma:smallness-derivation}, while the second follows from \cref{lemma:regularity-derivation}.

    For the inequality, note that
    $\Pbb\left(\Ucal \text{ is not homology-good}\right) = \Ebb(\psi_\Ucal)$, where $\psi_\Ucal$ is $0$ is $\Ucal$ is good, and $1$ otherwise.
    By the considerations in the first paragraph, it follows that
    \begin{align*}
        \Pbb\left(\Ucal \text{ is not homology-good}\right)
        \; & =\;
        \int_0^1 \psi_{\SupLevel{g}{\lambda}}\, \dsf \lambda                                                                      \\
        \; & \leq\;
        \int_0^1 \sum_{J \subseteq I} \dim \overline{\Hsf}\left(\bigcap_{j \in J} \SupLevel{g_j}{\lambda}\right)^2\, \dsf \lambda \\
        \; & =\;
        \int_0^1 \sum_{J \subseteq I} \dim \overline{\Hsf}\left(\SupLevel{\min_{j \in J}\, g_j}{\lambda}\right)^2\, \dsf \lambda  \\
        \; & =\;
        \sum_{J \subseteq I} \left\|\, \min_{j \in J} g_j \right\|_{\Hsf}^2,
    \end{align*}
    where in the inequality we simply used the fact that if a cover is not homology-good, then the dimension of the reduced homology of one of the intersections must be at least $1$, by definition.
    Moreover, a cover is not homology-good if and only if the homology of one of the intersections is least $1$ (also by definition), so the probability of $\Ucal$ not being good is zero if and only if the upper bound is zero.
\end{proof}

\subsection{Relaxation of the topological loss}
\label{section:relaxed-objective}

Let $g : \Mcal \to \Gamma^{k-1}$ be a fuzzy cover, and let $\Ucal = \{U_i\}_{i=1}^k$ be a random cover distributed as $\mu_g$, as in \cref{problem:riemannian-case-fuzzy-cover}.
Let $I = \{1, \dots, k\}$.
Fix a maximum homology dimension $\ell \in \Nbb$ of interest.
Then, the topological loss $\topgood(g) = \Pbb\left(\left\{U_i\right\} \text{ is not homology-good } \right)$ can be relaxed to a loss $\topgood_\ell(g) = \Pbb\left(\left\{U_i\right\} \text{ is not $\ell$-homology-good } \right)$, using \cref{definition:i-homology-good-cover}.
A derivation analogous to that in the proof of \cref{theorem:main-thm} shows that
\[
    \topgood_\ell(g) \leq
    \sum_{\substack{J \subseteq I\\ |J|\leq \ell+1}}
    \sum_{m=0}^{\ell - |J|-1}
    \left\| \, \min_{j \in J} \,g_j \right\|_{\Hsf_m}^2,
\]
which, in the case where $g : K \to \Gamma^{k-1}$ is a fuzzy cover of a simplicial complex, leads us to define the following estimator
\[
    \widehat{\topgood}_\ell(g) \coloneqq
    \sum_{\substack{J \subseteq I\\ |J|\leq \ell+1}}
    \sum_{m=0}^{\ell - |J|-1}
    \left\| \, \min_{j \in J} \,g_j \right\|_{\Hsf_m}^2,
\]
In particular, we have
\[
    \widehat{\topgood}_0(g) = \sum_{i \in I}\, \|g_i\|_{\Hsf_0}^2,
\]
which only depends on the $1$-skeleton of the simplicial complex, and thus makes sense for graphs.

\subsection{The probability measure associated to a fuzzy cover}
\label{section:sigma-algebra}

If $X$ is a finite set, the space of $k$-element covers of $X$ is discrete and can be endowed with the discrete $\sigma$-algebra.
If $g : X \to \Gamma^{k-1}$ is a fuzzy cover, then the map sending $\lambda \in [0,1]$ to the cover $\SupLevel{g}{\lambda}$ is measurable since $X$ only admits finitely many $k$-element covers, and for any $k$-element cover $\Ucal$ of $X$, the set $\{r \in [0,1] : \SupLevel{g}{r} = \Ucal\}$ is empty or a half-open interval.
If $X$ is a compact metric space (e.g., a compact, connected Riemannian manifold), then its set of subsets can be endowed with the Hausdorff metric, which then gives a $\sigma$-algebra structure to the set of subsets, and thus to the set of $k$-element covers.

There is a more general theory of random sets, which deals with this type of situation; see, e.g., \cite{molchanov} or Chapter~18~\cite{aliprantis}.
%There are natural topologies for the set of subsets of any topological space \cite{michael,fell}, which might allow one to generalize these arguments to the case of a continuous fuzzy cover on a topological space.
Here, we shall not go into details, since the functionals of the random covers we are interested in can be expressed as explicit integrals which do not involve random sets; see \cref{theorem:main-thm}.

%, and so does the
% which can be used to endow the set of open, $k$-element covers of $X$ with a $\sigma$-algebra.

\section{Computational complexity}
\label{section:computational-complexity}

The bulk of ShapeDiscover's computation time is spent on optimization, that is, on executing the $\mathrm{GradientDescent}$ method in \cref{alg:shapediscover}, so let us analyze its computational complexity.

Let $X$ be the input point cloud.
The neighborhood graph $G$ built on $X$ has size $|G| = O(\texttt{n\_neigh} \cdot |X|)$.
From the loss estimation formulas (\cref{section:loss-estimation}), we see that
$\widehat{\Small}$, $\widehat{\regular}$, and $\widehat{\reg}$ can be computed in
$O(\texttt{n\_cov} \cdot |G|)$ time.
A bit more interestingly, the loss $\widehat{\topgood}_0$, which involves the zero-dimensional persistence of a filtered graph, can be computed in $O(\texttt{n\_cov} \cdot |G| \cdot \log|G|)$ time, by sorting edges and using the elder rule \cite{edelsbrunner2010computational}; see, e.g., Section~5.3.3 in \cite{rolle-scoccola}.
The overall time complexity is thus:
\[
    O\left(\,\texttt{n\_epoch} \cdot \texttt{n\_cov} \cdot \texttt{n\_neigh} \cdot |X| \cdot \log |X|\,\right).
\]
The space complexity depends on maintaining the neighborhood graph $G$ and the parameters of the model implementing the fuzzy cover.
In the case of the model simply being an $|X|$-by-$\texttt{n\_cov}$ real matrix, as in \cref{section:computational-examples}, the space complexity is
$O\left((\texttt{n\_cov} + \texttt{n\_neigh}) \cdot |X|\right)$.

\section{The filtered simplicial complex associated to a fuzzy cover}
\label{section:filtered-simplicial-complex-from-fuzzy-cover}

Let $X$ be a set, let $g : X \to \Gamma^{k-1} \subseteq [0,1]^k$ be a fuzzy cover,
%Let $K$ be the full simplicial complex on the set $I = \{1, \dots, k\}$.
and let $I = \{1, \dots, k\}$.

For each $\lambda \in (0,1]$ we define the simplicial complex $\Nerve_\lambda(g) \coloneqq \Nerve(\SupLevel{g}{\lambda})$, using the nerve construction, recalled in \cref{section:introduction}.

Note that, if $\lambda < \lambda'$, then the cover $\SupLevel{g}{\lambda'}$ refines the cover $\SupLevel{g}{\lambda}$, which means that there is an inclusion of simplicial complexes $\Nerve_{\lambda'}(g) \subseteq \Nerve_\lambda(g)$.
In particular, $\{\Nerve_\lambda(g)\}_{\lambda \in (0,1]}$ is a simplicial complex filtered by $((0,1],\geq)$, where the order on $(0,1]$ is the opposite of the standard one, since, as stated below, if $\lambda < \lambda'$, then the cover $\SupLevel{g}{\lambda'}$ refines the cover $\SupLevel{g}{\lambda}$.

\paragraph{Standard rescaling.}
In order to put this filtered complex in the same footing as geometric complexes, which are usually indexed by the non-negative real numbers $[0,\infty)$, we apply the following standard rescaling.
Consider the isomorphism of posets
\begin{align*}
    \big(\,\,[0,\infty)\,,\, \leq\big) & \to \big(\,\,(0,1]\,,\, \geq\big) \\
    r                                  & \mapsto \exp(-r),
\end{align*}
with inverse given by $\lambda \mapsto -\log(\lambda)$.

Then, given a fuzzy cover $g : X \to \Gamma^{k-1}$, we obtain a filtered complex indexed by the non-negative real numbers by considering $\{\Nerve_{\exp(-r)}(g)\}_{r \geq 0}$.

%Given a subset $J \subseteq I$
%
%That is, the vertices of $K$ are given by the set $I$, and every subset $J \subseteq I$ forms a simplex of $K$.
%\luis{to do}

\section{Cover learning as a coarse-graining method}
\label{section:cover-learning-coarse-graining}

Cover learning can be interpreted as a coarse-graining method, in the sense that it provides a coarse representation of geometric data.
This connects our work with methods other than those in \cref{section:cover-learning-algorithms}, and here we comment on this.

In \cite{brugnone2019coarse} and \cite{huguet2023time}, coarsening is done by simplifying the global structure of the point cloud; for instance, by emphasizing cluster structure.
The main difference with cover learning is that cover learning produces a graph (or simplicial complex) which encodes the global structure of the point cloud, and which does not have the data points as vertices (but rather groups of data points).

\cite{pascucci2007robust} uses a technique based on the Reeb graph, which is also the main motivation for Mapper, but unlike Mapper it operates on a simplicial complex (or mesh).
The main difference with general cover learning is that, since they approximate a Reeb graph, they produce a one-dimensional simplicial complex, and thus, like Mapper (and as explained in \cref{section:cover-learning-algorithms} and \cref{section:unsuitability-mapper}) it cannot be used to do higher dimensional topological inference.

In \cite{wolf2019paga}, the authors present an end-to-end method for single-cell RNA-seq; the relevant part of their method is described in their section ``Graph partitioning and abstraction'' (pp.~7), where they explain how they coarsen an initial knn graph to a smaller graph; this is done by computing a clustering of the original graph, and then adding edges between clusters using a measure of connectivity between different clusters. For visualization purposes, their output serves very similar purposes as ours (e.g.,~\cref{figure:shapediscover-elegans}).
Since it is not a goal of theirs, their method is not suitable for higher dimensional topological inference.

%(Moor et al. 2020): This is an autoencoder-based dimensionality reduction algorithm, with the novelty being the usage of a topological loss on top the classical reconstruction loss. Two connections to our work: One is producing a coarsened representation of the data; here the difference is that they output an embedding in low dimensional space, while we output a graph (simplicial complex) with groups of data points as vertices. The second one is the usage of a persistence-based loss; the main difference with our usage is that we use it purely as regularization (we enforce trivial local topology, motivated by the nerve theorem), whereas they use it to directly enforce the topology of the low dimensional representation to be similar to that of the original data. This work will also be mentioned in our section "A.5. Topological persistence optimization".

\section{Unsuitability of 1D Mapper for higher-dimensional topological inference}
\label{section:unsuitability-mapper}

We give two explanations for this:

\begin{itemize}
    \item
          The initial cover $\{I_i\}_{i = 1}^k$ of $\Rbb$ used by 1D Mapper typically does not have triple-intersections (only pairwise intersecting intervals are used).
          Hence, the pullback cover $\{f^{-1}(I_i)\}_{i \in I}$ does also not have triple-intersections.
          The cover used by 1D Mapper is a refinement of $\{f^{-1}(I_i)\}_{i \in I}$, and a refinement of a cover with no triple-intersections could in principle have triple-intersections.
          However, the refinement is given by partitioning each cover element $f^{-1}(I_i)$ into disjoint subsets (since it is a clustering), which guarantees that the cover used by 1D Mapper does not have triple-intersections, which in turn implies, by definition, that the nerve does not have $k$-dimensional simplices for any $k \geq 2$.

    \item
          There exist consistency results for 1D Mapper, which imply that, under suitable assumption, the nerve of the 1D Mapper cover converges to the Reeb graph \cite{reeb} of an underlying manifold $\Mcal$ with respect to a function $\phi : \Mcal \to \Rbb$, see, e.g., Thm.~7~\cite{carriere-michel-oudot}.
          A Reeb graph is a graph, and hence does not contain higher dimensional information.
\end{itemize}

What this means is that the nerve of the 1D Mapper cover, by design, only contains $0$- and $1$-dimensional information, that is, it can detect cluster structure and loops in data, but it cannot be used to, e.g., estimate higher-dimensional homology.

\section{Witness complex and Ball Mapper}
\label{section:witness-complex-ball-mapper}

Let $X$ be a finite metric space, and let $Y \subseteq X$ be a subset.
The witness complex with parameter $v=0$, introduced in \cite{desilva-carlsson}, is a filtered simplicial complex $\{W_\infty(Y,X)_\epsilon\}_{\epsilon \geq 0}$, with vertices the elements of $Y$, and a simplex on vertices $y_1, \dots, y_k$ at scale $\epsilon$ if there exists $x \in X$ such that $d_X(y_i,x) \leq \epsilon$ for every $i \in \{1, \dots, k\}$.

\begin{proposition}
    \label{proposition:witness-mapper}
    Let $X$ be a finite metric space, let $\epsilon \geq 0$, and let $Y \subseteq X$ be an $\epsilon$-net of $X$.
    Then the nerve of the Ball Mapper cover constructed using the $\epsilon$-net $Y$ is equal to $W_\infty(Y,X)_\epsilon$.
\end{proposition}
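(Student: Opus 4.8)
The plan is to unwind both sides to their defining conditions on simplices and observe that these conditions are literally the same set-theoretic statement.

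First I would fix notation. The Ball Mapper cover associated to the $\epsilon$-net $Y = \{y_1, \dots, y_k\}$ is $\Ucal = \{B(y_i,\epsilon)\}_{i=1}^k$, where $B(y,\epsilon) = \{x \in X : d_X(x,y) \le \epsilon\}$; the hypothesis that $Y$ is an $\epsilon$-net is exactly what guarantees $\Ucal$ is genuinely a cover of $X$, so that $\Nerve(\Ucal)$ is defined. By \cref{definition:nerve}, $\Nerve(\Ucal)$ has vertex set $Y$, and a subset $\sigma = \{y_{i_1}, \dots, y_{i_m}\} \subseteq Y$ is a simplex precisely when $\bigcap_{j=1}^m B(y_{i_j},\epsilon) \neq \emptyset$. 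On the other side, $W_\infty(Y,X)_\epsilon$ also has vertex set $Y$, and such a $\sigma$ is a simplex precisely when there exists some $x \in X$ with $d_X(y_{i_j},x) \le \epsilon$ for all $j$.

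Second, I would record the key (elementary) equivalence: for any $x \in X$ and $y \in Y$ we have $x \in B(y,\epsilon)$ if and only if $d_X(x,y) \le \epsilon$. Chaining this over the index set, $\bigcap_{j=1}^m B(y_{i_j},\epsilon) \neq \emptyset$ if and only if there exists $x \in X$ lying in every $B(y_{i_j},\epsilon)$, if and only if there exists $x \in X$ with $d_X(y_{i_j},x) \le \epsilon$ for all $j$ — which is exactly the condition defining a simplex of $W_\infty(Y,X)_\epsilon$. Since the two simplicial complexes have the same vertex set and, by this equivalence, exactly the same simplices, they coincide as abstract (hence geometric) simplicial complexes, which is the claim.

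The only real obstacle here is bookkeeping rather than mathematics: one must be careful that the ball in the Ball Mapper construction is the \emph{closed} ball $\{x : d_X(x,y) \le \epsilon\}$, so as to match the ``$\le \epsilon$'' in the definition of $W_\infty$ (an open-ball convention would instead identify the nerve at scale $\epsilon$ with the witness complex at a supremum of strictly smaller scales, which is harmless for a finite metric space but changes the wording). I would also check the degenerate low-dimensional cases explicitly — in particular that each singleton $\{y_i\}$ is a simplex on both sides, which holds since $y_i \in B(y_i,\epsilon)$, so $y_i$ witnesses itself. No further input is needed; the $\epsilon$-net hypothesis is used only to make ``the Ball Mapper cover'' well-defined as a cover of $X$.
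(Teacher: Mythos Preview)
Your proposal is correct and is essentially identical to the paper's own proof: both simply observe that the vertex sets agree and that the simplex condition on each side unwinds to the same statement, namely the existence of a witness $x \in X$ with $d_X(y_{i_j},x) \le \epsilon$ for all $j$, which is equivalent to $\bigcap_j B(y_{i_j},\epsilon) \neq \emptyset$. Your additional remarks about the closed-ball convention and the role of the $\epsilon$-net hypothesis are accurate bookkeeping not spelled out in the paper but change nothing about the argument.
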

\begin{proof}
    The two simplicial complexes have the same set of vertices.
    By definition, there is a simplex in $W_\infty(Y,X)_\epsilon$ on vertices $y_1, \dots, y_k \in Y$ if and only if there exists $x \in X$ such that $d_X(y_i,x) \leq \epsilon$ for every $i \in \{1, \dots, k\}$, which is equivalent to $\bigcap_{i=1}^k B(y,\epsilon) \neq \emptyset$, and this is equivalent to there being a simplex on the same vertices in the nerve of the Ball Mapper cover.
\end{proof}

Thus, the filtration output by the multiscale Ball Mapper algorithm of \cite{dlotko} consists of a finite set of filtration values of the Witness complex with parameter $v=0$.

%\section{Other implementation details}

\section{More on loss estimation}
\label{section:more-on-loss}

\subsection{Loss normalization}
\label{section:loss-normalization}
Let $W = \sum_{(x,y) \in G} w(x,y)$ be the total edge weight, and define
\begin{align*}
    \eta_M & \coloneqq \frac{1}{k \cdot |X|^2} \\
    \eta_G & \coloneqq \frac{1}{k \cdot W^2}   \\
    \eta_T & \coloneqq \frac{1}{k \cdot |X|}   \\
    \eta_R & \coloneqq \frac{1}{k \cdot W}\,.
\end{align*}

\subsection{Regularization bounds geometry loss}
\label{section:regularization-bounds-geometry-loss}

In our experiments, the geometry loss $\widehat{\regular}$ usually plays little role, in the sense that results are very similar with or without it.
We believe that this is because the regularization $\widehat{\reg}$ is sufficient to ensure geometric regularity; here we give weak evidence of this, by showing that $\widehat{\regular} \leq \widehat{\reg}$.
%Since in practice getting rid of $\widehat{\regular}$ means one fewer parameter to tune, we do not include this loss in ShapeDiscover.

Let $X$ be a finite set, let $G$ be a weighted graph with $X$ as vertex set,
let $E$ be the set of edges of $G$, and let $W = \sum_{e \in E} w(e)$ be the total edge weight.
If $F : E \to \Rbb$, then
\begin{equation}
    \label{equation:using-jensen}
    \left(\frac{1}{W} \cdot \sum_{e \in E} w(e)\cdot F(e) \right)^2
    \leq
    \frac{1}{W} \cdot \sum_{e \in E} w(e) \cdot F(e)^2\,,
\end{equation}
by Jensen's inequality.

If $g : X \to \Gamma^{k-1}$ is a fuzzy cover, letting $F(x,y) = |g_i(x) - g_i(y)|$ and summing over $1 \leq i \leq k$ gives $\widehat{\regular}(g) \leq \widehat{\reg}(g)$, by \cref{equation:using-jensen}.

\section{More on implementation and examples}
\label{section:more-on-examples}

\subsection{Software}
\label{section:software}

\subsubsection{Implementation}
Our implementation of ShapeDiscover \cite{shapediscover-code} is in PyTorch \cite{pytorch}, and we rely on
Numpy \cite{numpy}, Scipy \cite{scipy}, Numba \cite{numba}, Scikit-learn \cite{scikit-learn}, and Gudhi \cite{gudhi}.

Visualizations are done with Matplotlib \cite{matplotlib}, Networkx \cite{networkx}, and Pyvis \cite{pyvis}.

\subsubsection{Experiments}

We compute Vietoris--Rips filtrations using the Python frontend \cite{pyripser} for Ripser \cite{ripser}.
For edge collapse and sparse Vietoris--Rips we use the implementations in \cite{gudhi},
and for discrete Morse theory on filtered complexes we use Perseus \cite{perseus}.
We use the KeplerMapper implementation of the Mapper algorithm \cite{keplermapper}.

All experiments were run on a MacBook Pro with Apple M1 Pro processor and 8GB of RAM.

\subsection{Datasets}
\label{section:dataset}

\paragraph{Synthetic data.}
These consist of $2000$ point uniform samples of the unit $2$- and $3$-spheres.
For the topological inference example, the Betti numbers to be recovered are $\{1,0,1\}$, and $\{1,0,0,1\}$, respectively.

\paragraph{Human in 3D.}
This consists of a $9508$ point sample of the surface of a human figure in three dimensions (see \cref{figure:human-octopus-mesh}).
Topologically, it is thus equivalent to a $2$-sphere, and in the topological inference example the Betti numbers to be recovered are $\{1,0,1\}$.

\paragraph{Video dynamical system.}
The dataset is from \cite{lederman-talmon}, and consists of a video of two figurines rotating at different speeds.
When interpreting it as a dynamical system, the state space consists of two independent circular coordinates, and thus of a topological $2$-torus; this is confirmed in \cite{scoccola-et-al} using persistent homology.
We use the same preprocessing as \cite{scoccola-et-al}, which results in a point cloud with $2022$ points in $10$ dimensions.
For the topological inference example, the Betti numbers to be recovered are $\{1,2,1\}$.

\paragraph{Population activity in grid cells data.}
The dataset is from \cite{gardner-et-al}, and we follow their preprocessing steps to obtain a point cloud with $10000$ points in $6$ dimensions.
As is confirmed in \cite{gardner-et-al} using UMAP and persistent homology, the point cloud is concentrated on a topological $2$-torus, so the Betti numbers to be recovered are $\{1, 2, 1\}$.

\paragraph{Octopus in 3D.}
This consists of a $7812$ point sample of the surface of an octopus figure in three dimensions (see \cref{figure:human-octopus-mesh}).

\paragraph{Small handwritten digits datset.}
This is the dataset of \cite{alpaydin-kaynak}, consisting of $5620$ handwritten digits encoded as vectors in $64$ dimensions.

\paragraph{MNIST.}
This is the classical dataset of \cite{deng}.
We use the training data, which consists of $60000$ handwritten digits encoded as vectors in $784$ dimensions.

\paragraph{C.~elegans data.}
This is the dataset of \cite{packer-et-al}, which we preprocess using Monocle 3 as in \cite{monocle}.
This results in a point cloud with $6188$ points in $50$ dimensions.
There are $28$ cell types.
To help visualization, we keep $16$ cell types (those whose flare structure is clearly identified by UMAP's 2D projection; see \cref{figure:elegans-umap}), which we label from $1$ to $16$, and label the rest of the cell types by $0$; see \cref{table:cell-types}.
%We keep the cell types whose flare structure is most distinctly captured by UMAP; see \cref{figure:elegans-umap}.

\begin{table}
    \caption{Correspondence between labels in \cref{figure:shapediscover-elegans,figure:elegans-umap} and the cell types of \cite{packer-et-al}.}
    \label{table:cell-types}
    \begin{center}
        \tiny
        \begin{tabular}{|c|c|c|c|c|c|c|c|c|c|c|c|c|c|c|c|c|c|c|}
            \hline
            Label     & 1   & 2   & 3        & 4   & 5   & 6    & 7        & 8   & 9   & 10  & 11  & 12  & 13  & 14  & 15  & 16  & 0         \\
            \hline
            Cell type & ADF & ADL & ADF\_AWB & AFD & ASE & ASER & ASE\_par & ASG & ASH & ASI & ASJ & ASK & AUA & AWA & AWB & AWC & all other \\
            \hline
        \end{tabular}
    \end{center}
\end{table}

\paragraph{Other comments.}
In the topological inference experiments, we scale datasets so that the sparsification parameter for sparse Vietoris--Rips can be fixed.

\subsection{Details on efficient topological representation experiment}
\label{section:efficient-topological-representation-details}
We compare ShapeDiscover to the following approaches to topological inference:
\begin{itemize}
    \item (S + Rips) Furthest point subsample + Vietoris--Rips.
    \item (S + Rips) Furthest point subsample + Vietoris--Rips + discrete Morse theory on filtrations \cite{mischaikow-nanda}.
    \item (S + Rips + edg.coll.) Furthest point subsample + Vietoris--Rips + edge collapse \cite{boissonat-pritam}.
    \item (S + sparse Rips) Furthest point subsample + sparse Vietoris--Rips \cite{sheehy}.
    \item (S + Alpha) Furthest point subsample + Alpha complex \cite{boissonat-chazal-yvinec}.
    \item (Witness) Witness complex \cite{desilva-carlsson}.
          Recall from \cref{section:witness-complex-ball-mapper} that, when $v=0$, the Witness complex is essentially equivalent to multiscale Ball Mapper \cite{dlotko} (MS BMapper).
\end{itemize}

Furthest point subsampling refers to the strategy for building a subsample of a point cloud (or finite metric space) given by choosing an initial point and iteratively choosing the point that maximizes the (minimum) distance to the points already chosen.

OOM means out of memory.
The implementation of discrete Morse theory on Vietoris--Rips complexes we use \cite{perseus} runs out of memory on larger datasets; this implementation also does not report number of vertices of the final simplicial complex, hence the ``-'' in the table.
Alpha complexes do not scale well with ambient dimension, which is reflected in the fact that it runs out of memory in the dynamical system dataset, which has ambient dimension $10$.

\subsection{Homology recovery quotient}
\label{section:homology-recovery-quotient}

This is a simplified version of the quotient used to quantify topological recovery in \cite{desilva-carlsson}.
Given a filtered simplicial complex $\Kbb = \{K_r\}_{r \in \Rbb}$, we let $a \in \Rbb$ (resp.~$b \in \Rbb$) the smallest (resp.~largest), finite, left (resp.~right) endpoint of bars in the barcode of $\Kbb$.
Then, given target Betti numbers $\{\beta_0, \dots, \beta_n\}$, the homology recovery quotient of $\Kbb$ is defined as:
\[
    \frac{|\{r \in \Rbb : \dim \Hsf_i(K_r) = \beta_i, \forall 0 \leq i \leq n\}|}{b-a}\,,
\]
where in the numerator we are taking the Lebesgue measure.

\subsection{Why not use Mapper in the topological inference experiment}
\label{section:why-not-mapper}
As explained in \cref{section:cover-learning-algorithms,section:unsuitability-mapper}, 1D Mapper does not contain higher-dimensional information.
Higher-dimensional extensions of Mapper exist, but the parameters are difficult to tune, and require higher-dimensional filter functions.
Mapper is also static, that is, it produces a single simplicial complex.
In order to produce a sequence of simplicial complexes, and thus a barcode that can be compared with standard approaches to topological inference, one would need to use multiscale Mapper.
But to the best of our knowledge there is no implementation of a multiscale, higher-dimensional Mapper.
We expect the parameters of such a procedure to be very hard to tune; this could be alleviated by using the framework of \cite{oulhaj-carriere-michel}, to produce a differentiable, multiscale, higher-dimensional Mapper.

%\subsection{Covers from examples}

%\begin{figure}
%    \includegraphics[width=\linewidth]{}
%    \caption{TODO}
%    \label{figure:cover-human}
%\end{figure}

\subsection{Details on toroidal topology in neuroscience experiment}
\label{section:rat-brain-torus}

\paragraph{Result.}
We run ShapeDiscover on the data of \cite{gardner-et-al} (see \cref{section:dataset} for details), with $\texttt{n\_cov} = 15$; resulting in a filtered complex with $15$ vertices.
The barcode thus obtained is in \cref{figure:rat-brain}, and the homology recovery quotient is $0.2$, which is remarkable given that we only used a cover with $15$ elements.

\paragraph{Comparison to Vietoris--Rips.}
We also run furthest point subsaple + Vietoris--Rips on the dataset.
In our experience, it is not possible to uncover the toroidal topology by directly using Vietoris--Rips:
we subsampled $5000$ points and computed the Vietoris--Rips barcode, which resulted in a homology recovery quotient of $0$, that is, in a filtration which at no point has toroidal topology.
This agrees with the methodology of \cite{gardner-et-al}, where the toroidal topology was uncovered with persistent homology only after a non-linear dimensionality reduction step using UMAP.

\paragraph{Comparison to other toroidal topology dataset.}
In our other experiment on a toroidal dataset (the dynamical system data of \cref{section:topological-inference-example}), ShapeDiscover required a cover with $52$ elements to clearly recover the torus.
We believe this is because, in that dataset, the aspect ratio of the torus is highly skewed, since one of the figurines is significantly larger than the other one (see Fig~2~\cite{lederman-talmon}), which implies that, in the state space of the dynamical system, one of the two circular coordinates is metrically larger than the other one.

\subsection{Plotting the nerve of a cover}
\label{section:plot-convetions}

We plot the nerve of a cover $\{U_i\}_{i \in I}$ with the following conventions:
\begin{itemize}
    \item The size of a vertex corresponding to $U_i$ is proportional to $\log(|U_i|+1)$.
    \item The thickness of an edge corresponding to $U_i \cap U_j$ is proportional to $\log(|U_i\cap U_j|+1)$.
    \item The length of an edge corresponding to $U_i \cap U_j$ is proportional to $1/|U_i\cap U_j|$.
    \item When the data has labels, the vertex corresponding to $U_i$ is plotted as a pie chart, which shows the proportion of elements of each class in $U_i$ (note that labels are not used to produce the cover).
          This is a standard practice in for Mapper graphs, see, e.g., \cite{chalapathi-zhou-wang,zhao-ye-chen-wang}.
\end{itemize}

\subsection{Differentiable Mapper on visualization experiments}
\label{section:diff-mapper-results}
We experimented with the implementation of Differentiable Mapper \cite{oulhaj-carriere-michel} provided in \cite{oulhaj} on the digits and C.~elegans data, but we were unable to obtain meaningful results.
As parametric family of filter functions, we used a linear projection, as used in \cite{oulhaj-carriere-michel}.
Lack of success could be due to the difficulty of learning a filter function which properly reflects the structure of the datasets, and it remains a possibility that a more sophisticated parametric family of filter functions (as suggested in the conclusions of \cite{oulhaj-carriere-michel}) with finer parameter tuning will lead to good results.

\subsection{Other analyses}
\label{section:other-experiments}

See \cref{table:ablation} for an ablation study,
\cref{table:parameter-sensitivity} for a parameter sensitivity analysis,
and \cref{table:noise-sensitivity} for a noise sensitivity analysis.

%\subsection{Other figures for \cref{section:large-scale-visualization}}
%\label{section:other-figures-visualization}
%
%See \cref{figure:cover-octopus-shapediscover,figure:cover-octopus-mapper,figure:diffmapper-human,figure:ballmapper-human,figure:ballmapper-octopus}. \todo{more?}

\begin{figure}[H]
    \includegraphics[width=\linewidth]{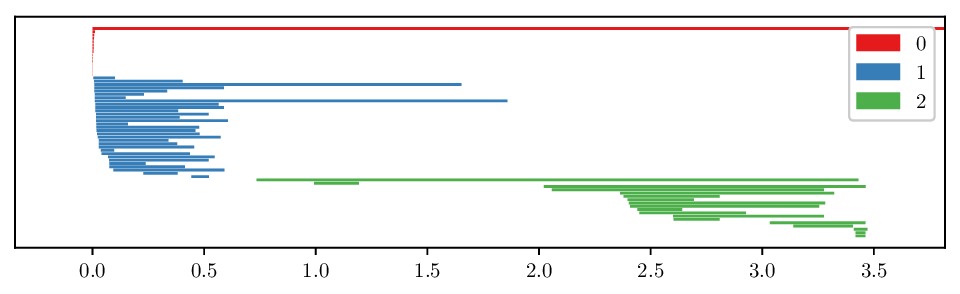}
    \caption{The barcode of the filtered simplicial complex obtained by running ShapeDiscover with a cover with $15$ elements on the dataset of \cite{gardner-et-al}.
        The toroidal topology is apparent between scales $1.25$ and $2$, and the homology recovery quotient is $0.2$.
        Compare with Fig~1~\cite{gardner-et-al}.
    }
    \label{figure:rat-brain}
\end{figure}

\begin{figure}[H]
    \begin{center}
        \includegraphics[width=0.4\linewidth]{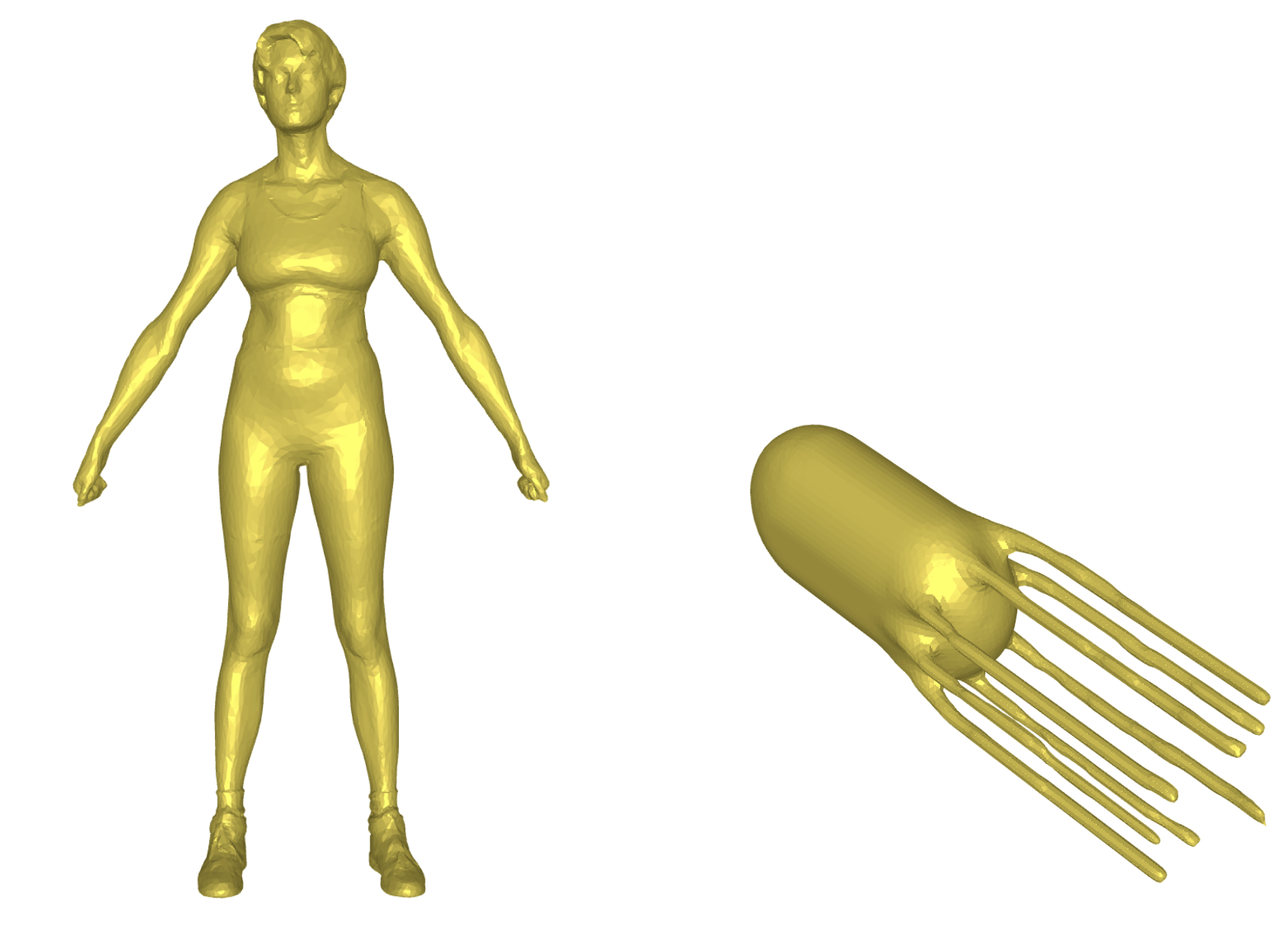}
    \end{center}
    \caption{Human and octopus datasets projected to two dimensions.
        Figures taken from \cite{oulhaj-carriere-michel}.}
    \label{figure:human-octopus-mesh}
\end{figure}

\begin{figure}[H]
    \includegraphics[width=\linewidth]{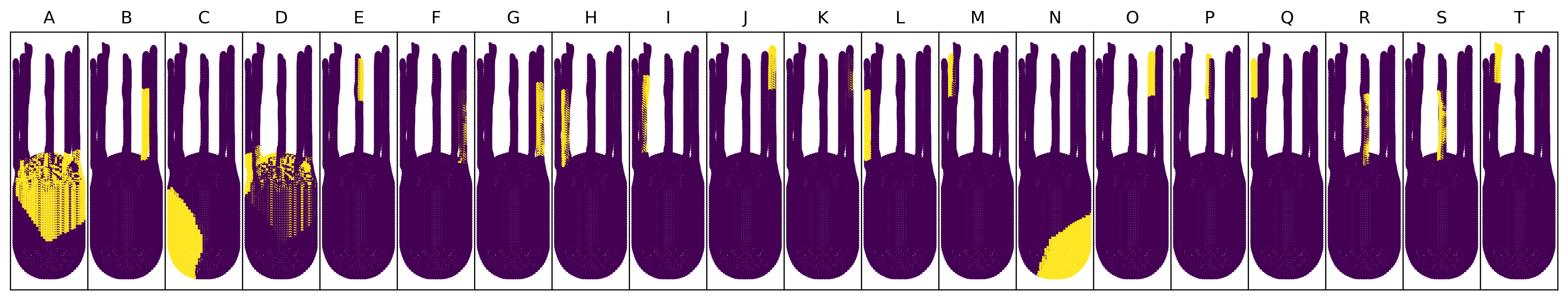}
    \caption{Cover of the octopus data obtained with ShapeDiscover with default parameters and $\texttt{n\_cov} = 20$.}
    \label{figure:cover-octopus-shapediscover}
\end{figure}

\begin{figure}[H]
    \includegraphics[width=\linewidth]{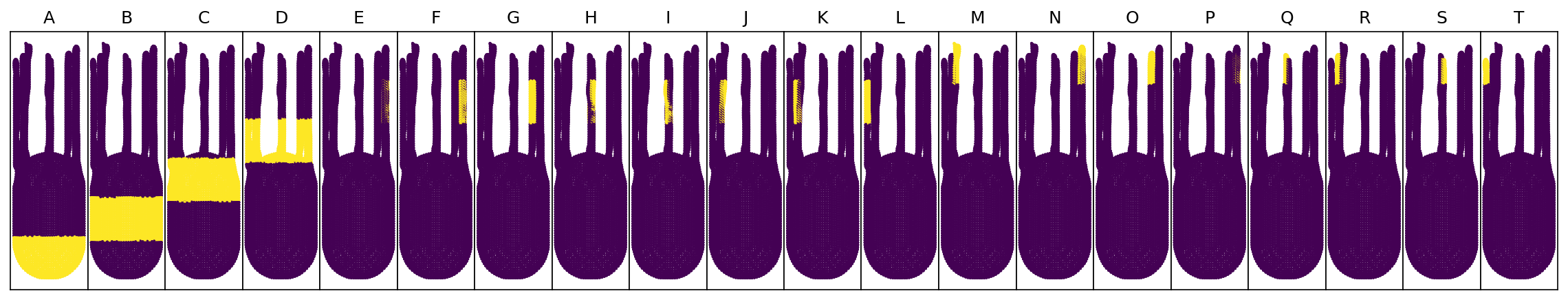}
    \caption{Cover of the octopus data obtained with Mapper.}
    \label{figure:cover-octopus-mapper}
\end{figure}

\begin{figure}[H]
    \begin{center}
        \includegraphics[width=0.2\linewidth]{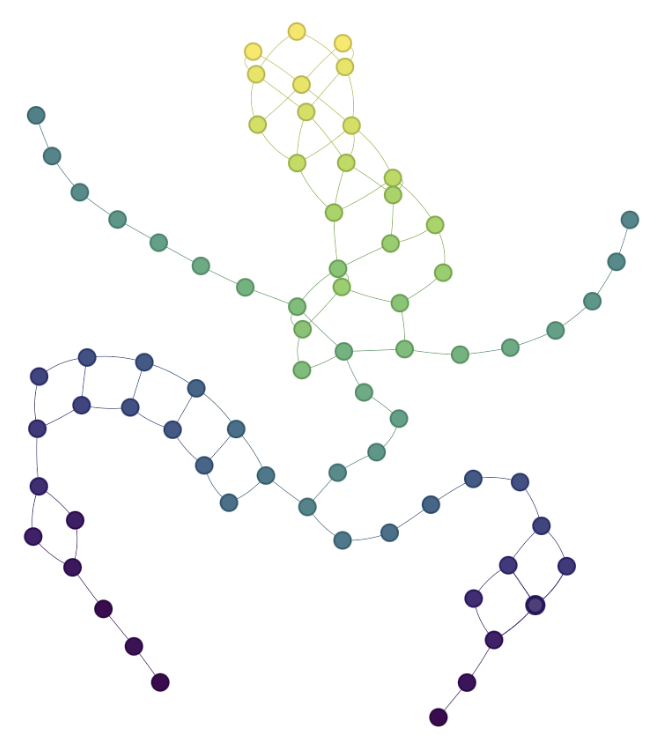}
        \hspace{2cm}
        \includegraphics[width=0.2\linewidth]{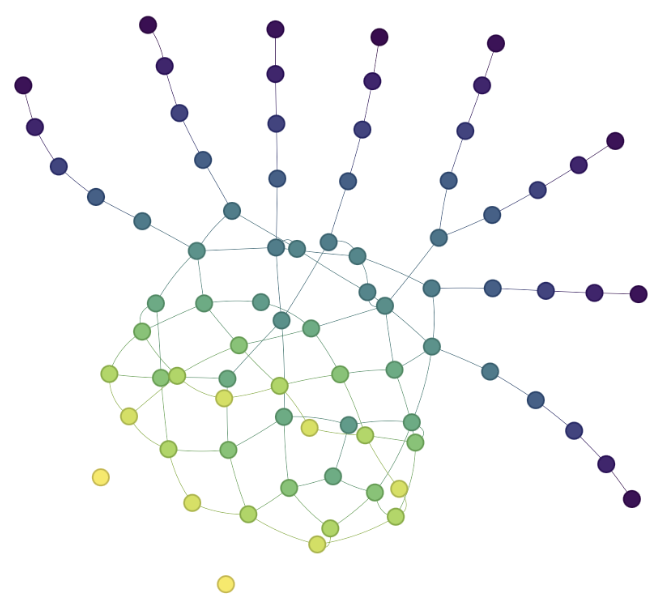}
    \end{center}
    \caption{Differentiable Mapper on human and octopus data.
        Figures taken from \cite{oulhaj-carriere-michel}.}
    \label{figure:diffmapper-human}
\end{figure}

\begin{figure}[H]
    \begin{center}
        \includegraphics[width=0.3\linewidth]{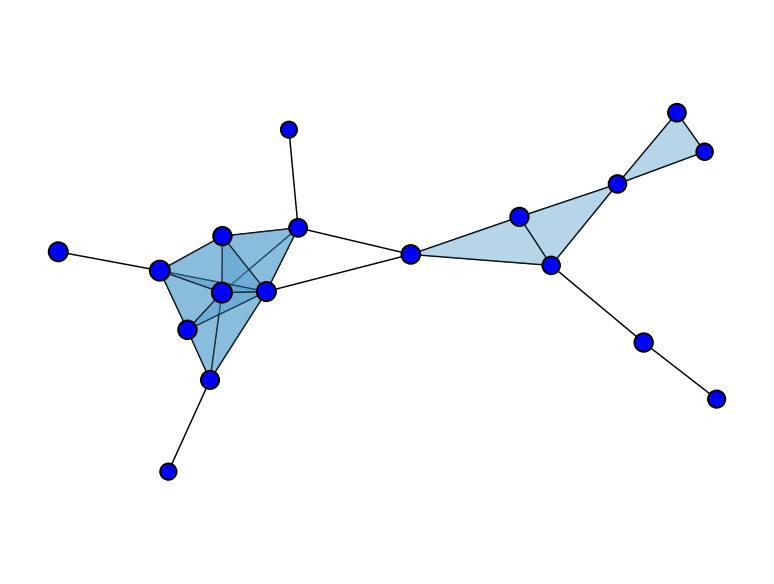}
        \includegraphics[width=0.3\linewidth]{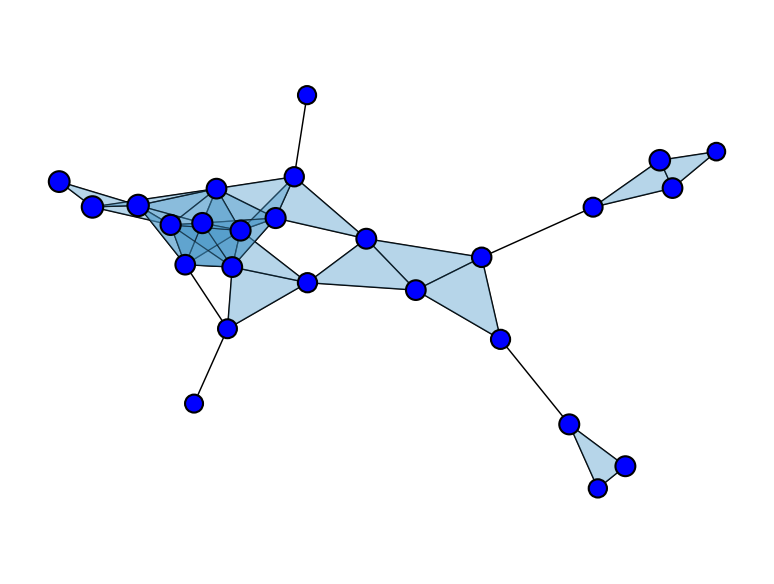}
        \includegraphics[width=0.3\linewidth]{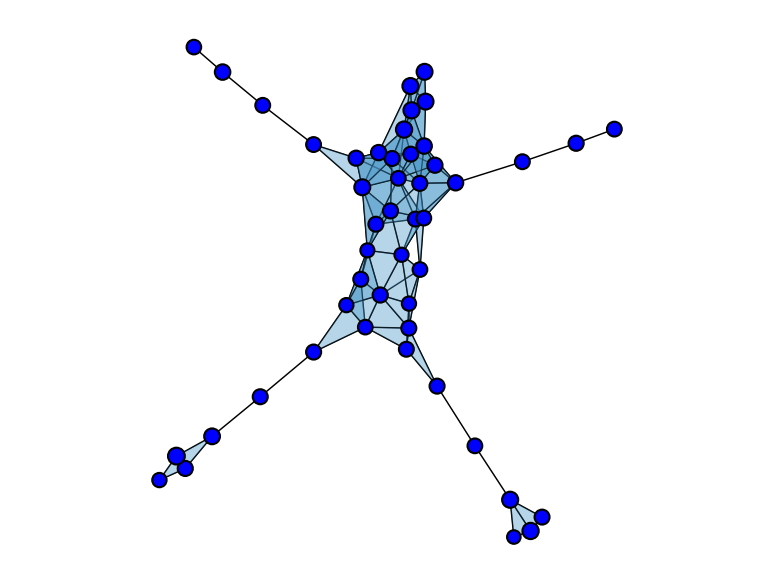}
    \end{center}
    \caption{Ball Mapper on human data with parameter (left-to-right): $\epsilon = 0.25$, $\epsilon = 0.2$, $\epsilon = 0.15$.
    The cover sizes are $18$, $26$, $48$, respectively.
    We see that the $\epsilon$ parameter depends significantly on the data, and is hard to tune.
    The underlying topology (that of a $2$-sphere) is also hard to recover: none of the simplicial complexes have the underlying topology, which agrees with the findings in \cref{table:topological-inference} (row ``Witness ($v=0$) / MS BMapper'').}
    \label{figure:ballmapper-human}
\end{figure}

\begin{figure}[H]
    \begin{center}
        \includegraphics[width=0.3\linewidth]{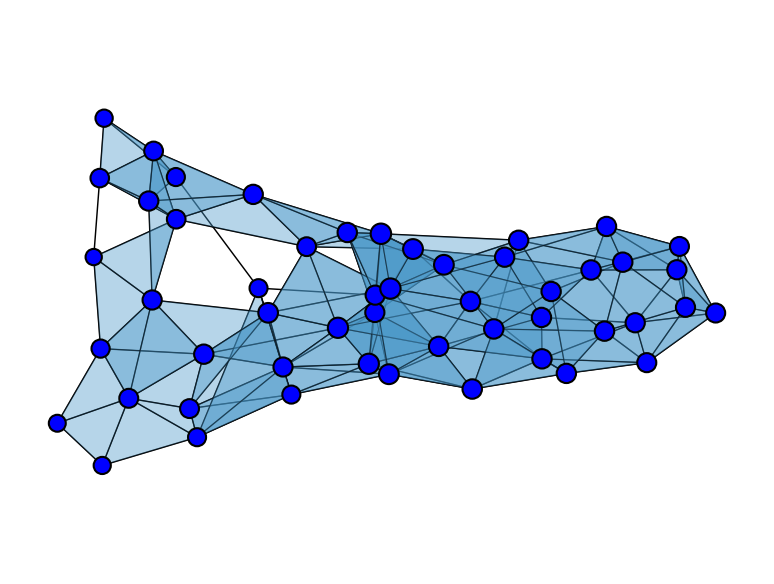}
        \includegraphics[width=0.3\linewidth]{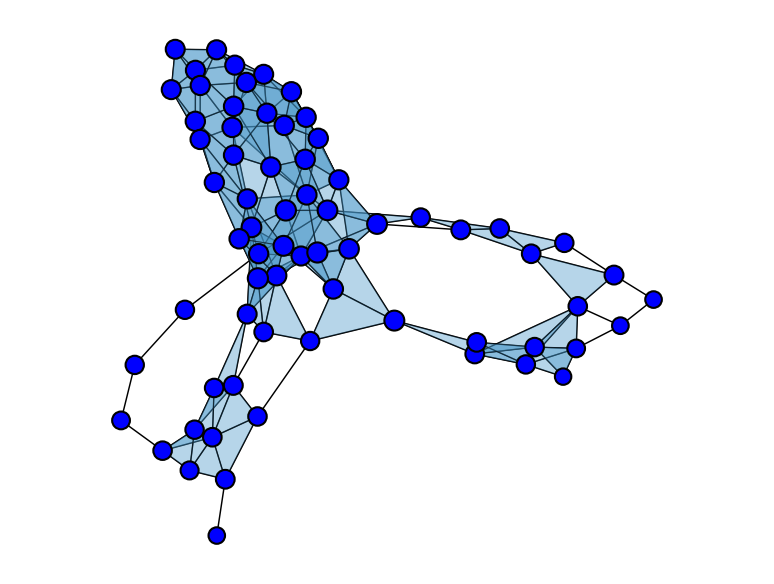}
        \includegraphics[width=0.3\linewidth]{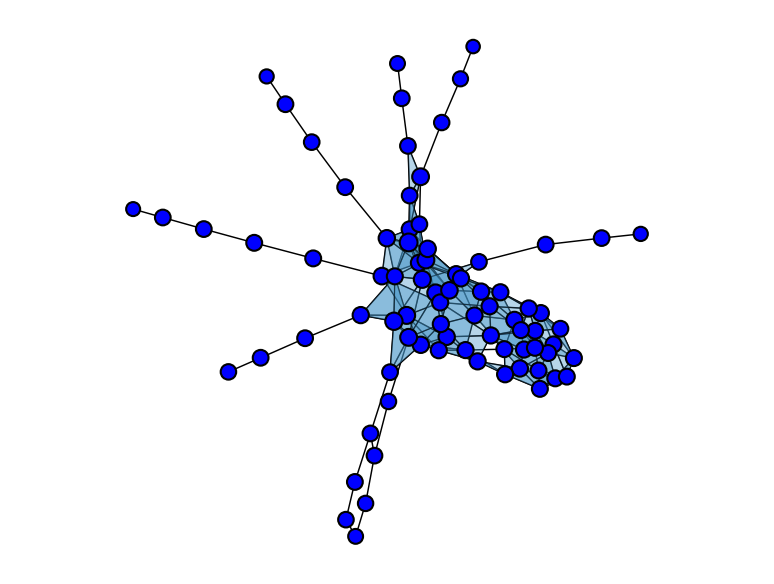}
    \end{center}
    \caption{Ball Mapper on octopus data with parameter (left-to-right): $\epsilon = 0.2$, $\epsilon = 0.175$, $\epsilon = 0.15$.
        The cover sizes are $51$, $68$, $80$, respectively.
        The same conclusions as in \cref{figure:ballmapper-human} hold.}
    \label{figure:ballmapper-octopus}
\end{figure}

\begin{figure}[H]
    \includegraphics[width=0.19\linewidth]{figures/umap-digits.png}
    \hfill
    \includegraphics[width=0.78\linewidth]{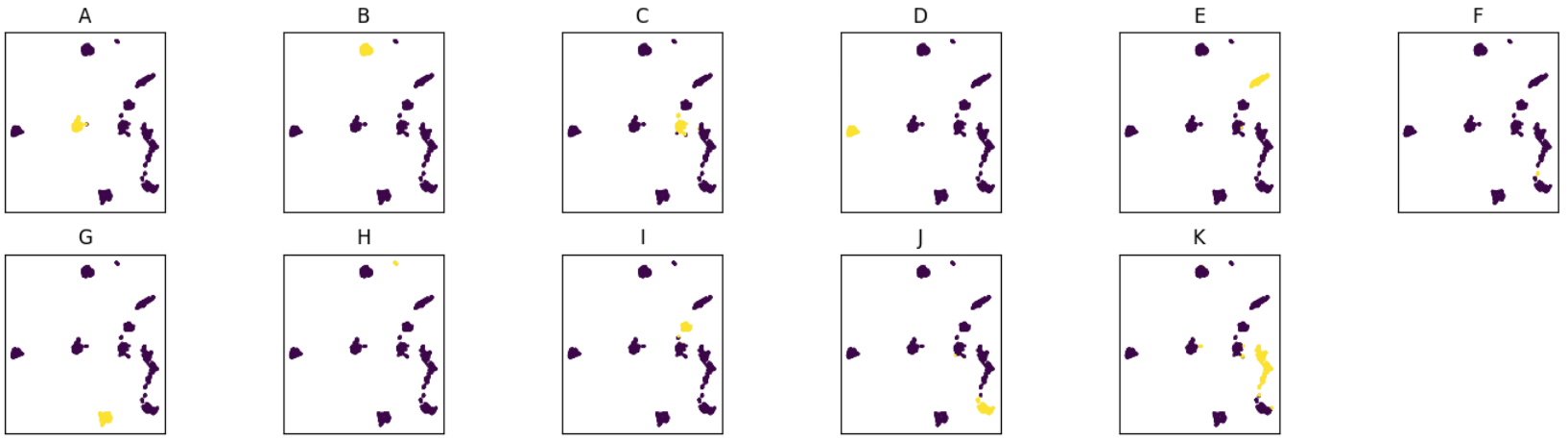}
    \caption{\emph{Left.} 2D projection of the digits data obtained with UMAP.
        \emph{Right.} Cover of the digits data obtained with ShapeDiscover, plotted on the 2D projection of the data obtained with UMAP.}
    \label{figure:cover-digits-shapediscover}
\end{figure}

\begin{figure}[H]
    \includegraphics[width=0.33\linewidth]{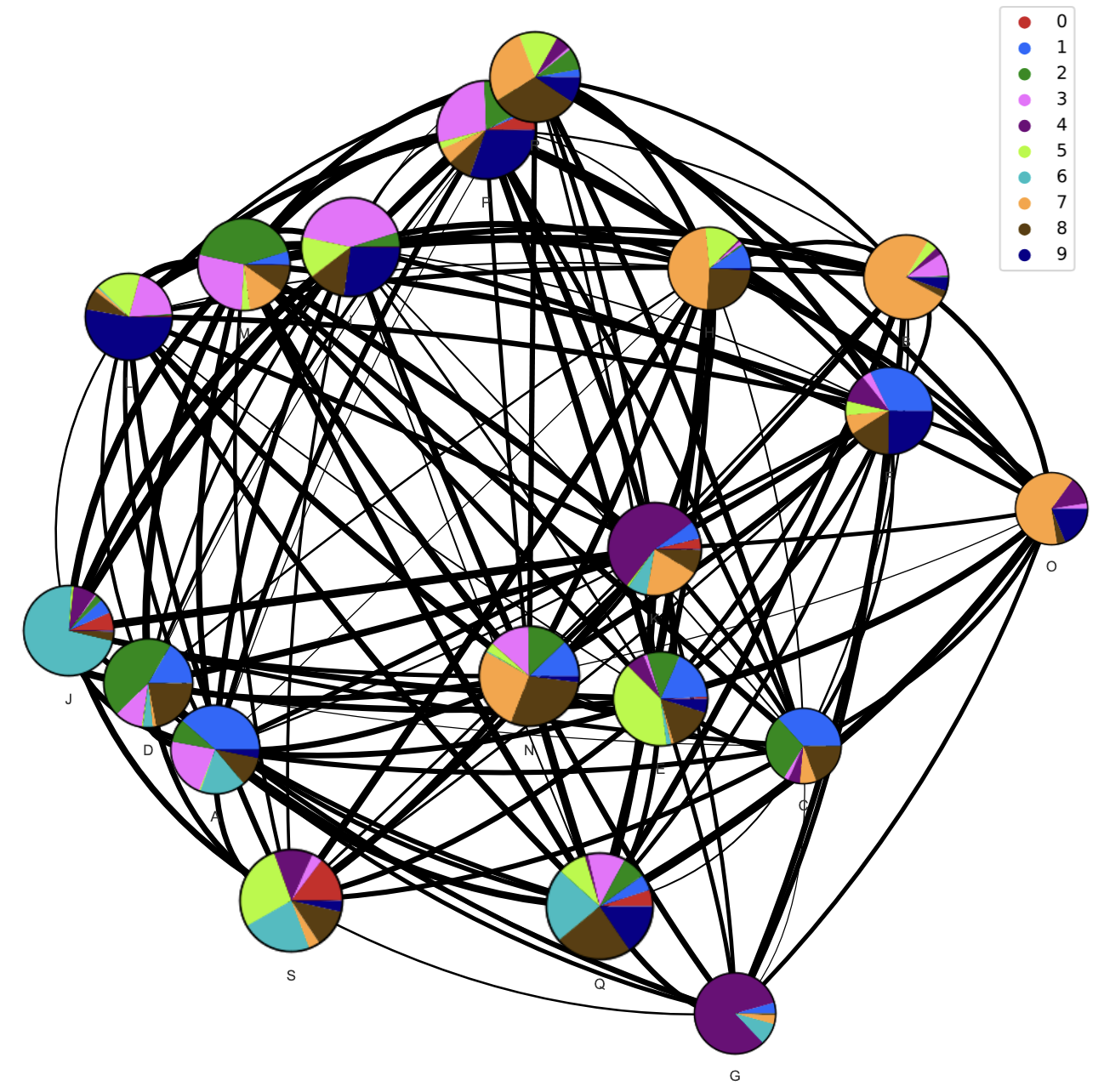}
    \hfill
    \includegraphics[width=0.65\linewidth]{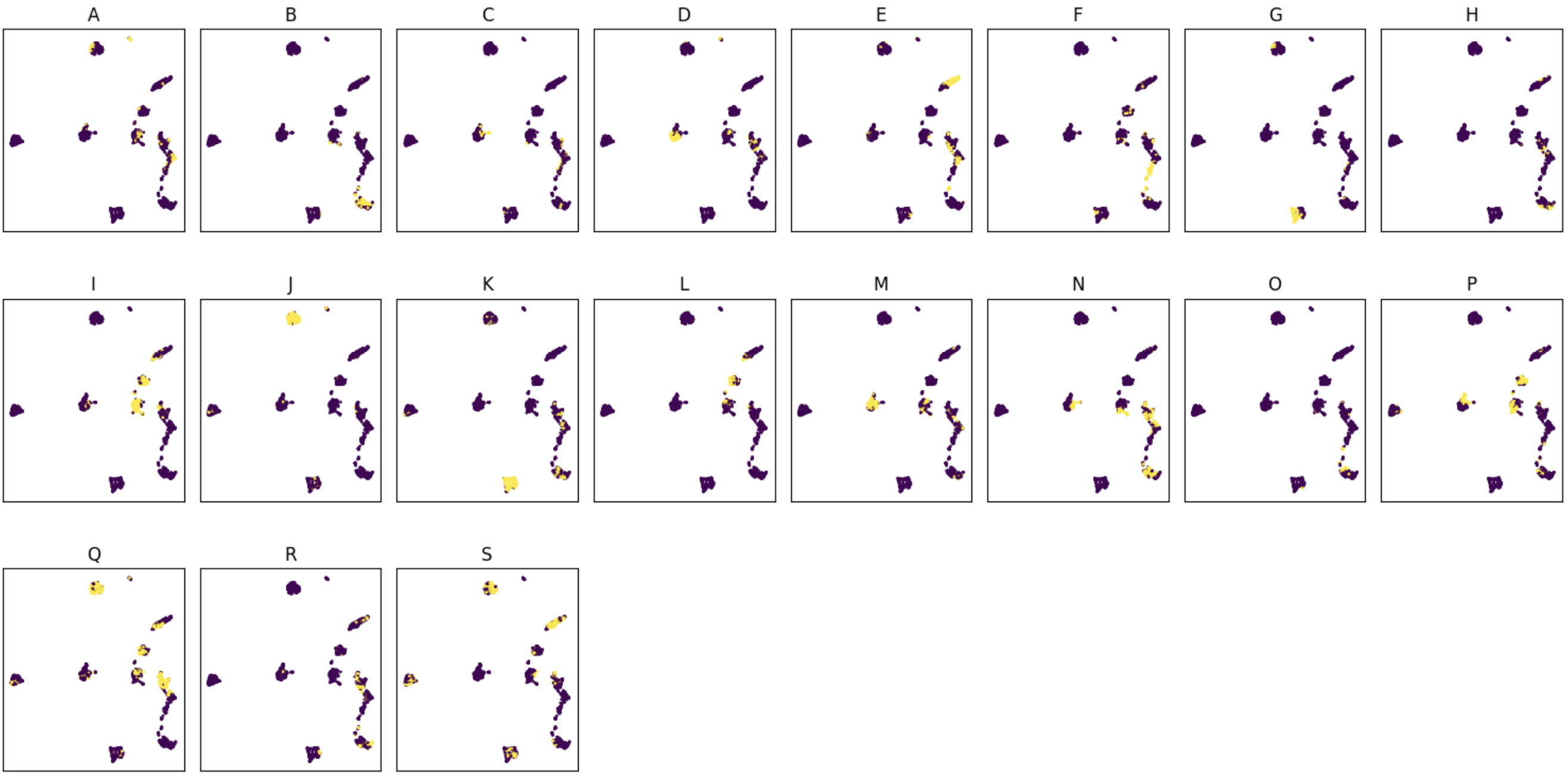}
    \caption{Ball Mapper on digits data with $\epsilon = 45$, which results in a cover of size $19$.
        The (1-skeleton) of the nerve of the cover (left), and the cover plotted on the 2D projection of the data obtained with UMAP (left); see \cref{figure:shapediscover-digits}.
        We see that different digit are mostly mixed up, and there cover elements overlap with many others, presumably due to the curse of dimensionality.}
    \label{figure:cover-digits-ballmapper}
\end{figure}

\begin{figure}[H]
    \raisebox{-1.2cm}{\includegraphics[width=0.28\linewidth]{figures/umap-mnist.png}}
    \hfill
    \includegraphics[width=0.70\linewidth]{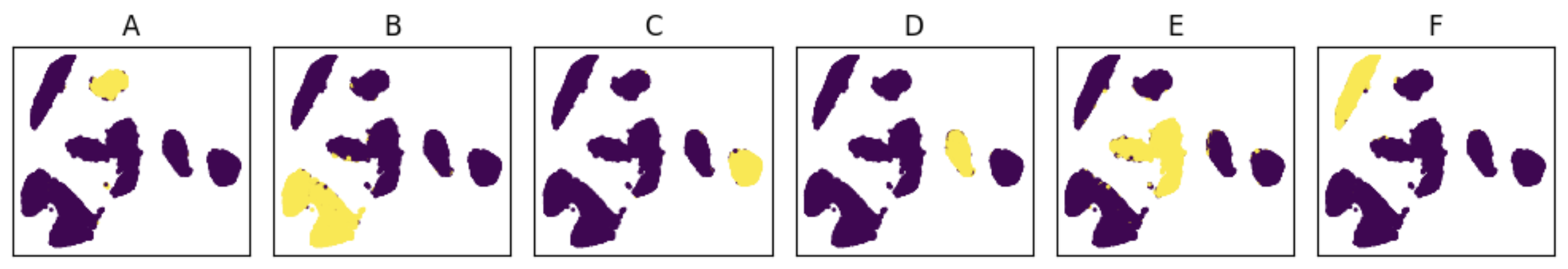}
    \caption{\emph{Left.} 2D projection of MNIST obtained with UMAP.
        \emph{Right.} Cover of MNIST obtained with ShapeDiscover, plotted on the 2D projection of the data obtained with UMAP.}
    \label{figure:cover-mnist-shapediscover}
\end{figure}

\begin{figure}[H]
    \begin{center}
        \includegraphics[width=0.4\linewidth]{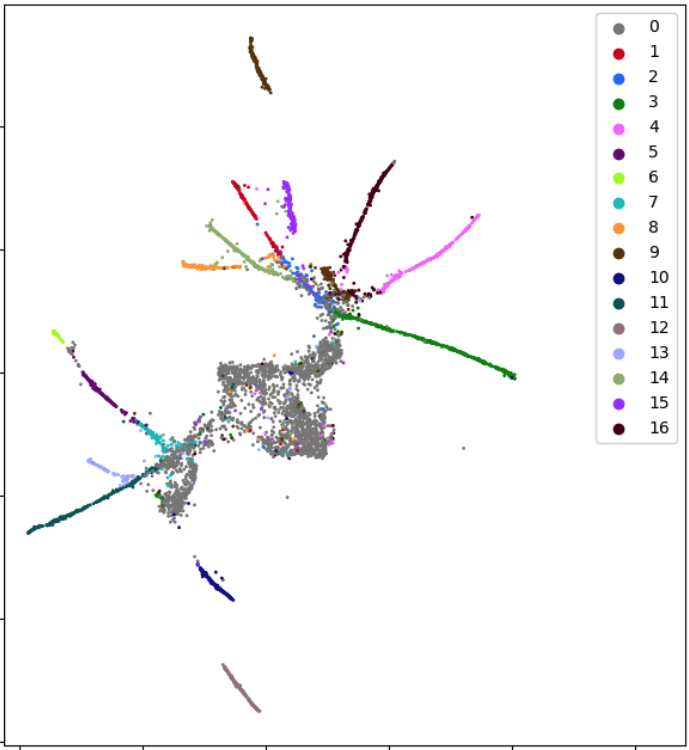}
    \end{center}
    \caption{C.~elegans data projected with UMAP.
        Labels are described in \cref{table:cell-types}.}
    \label{figure:elegans-umap}
\end{figure}

\begin{figure}[H]
    \begin{center}
        \includegraphics[width=0.35\linewidth]{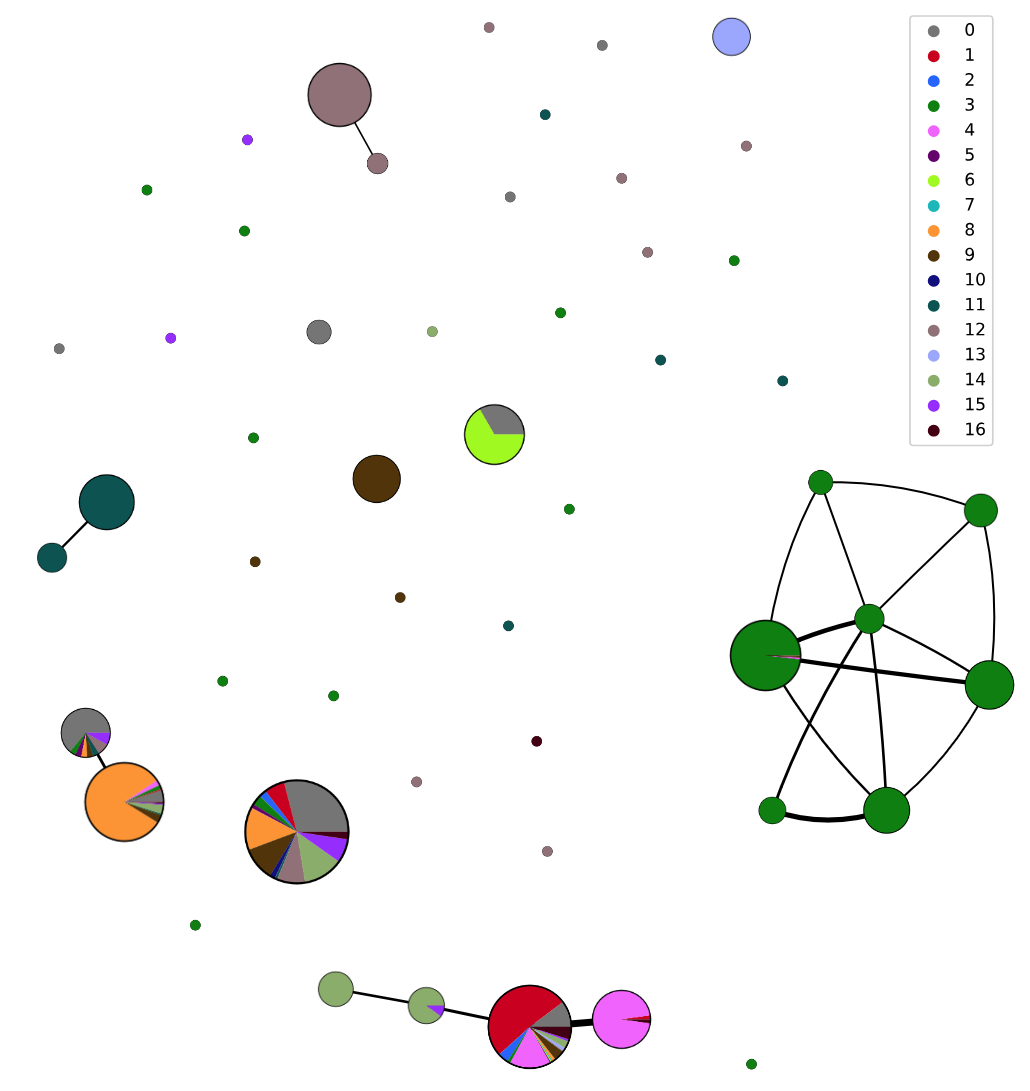}
    \end{center}
    \caption{Ball Mapper on C.~elegans data with $\epsilon = 70$, which results in a cover of size $51$.
    There is a large, independent, cover element with contains most points.
    In order to break it apart, we must make $\epsilon$ smaller, but this results in more and more tiny, independent cover elements.
    The flare structure is not recovered.}
    \label{figure:elegans-ballmapper}
\end{figure}

\begin{table}[H]
    \caption{Times (in seconds) taken to run ShapeDiscover on the datasets of the examples.
        Columns correspond to datasets together with the $\texttt{n\_cov}$ parameter of \cref{table:topological-inference}.
        Rows correspond to ShapeDiscover's subroutines.
        We see that the bulk of the time is taken by the optimization step, and that the topology loss accounts for a large proportion of this.}
    \label{table:computation-times}
    \begin{center}
        \small
        \csvautotabularcenter{results/topological_inference_times.csv}
    \end{center}
\end{table}

\begin{table}[H]
    \caption{Rows correspond to datasets together with the $\texttt{n\_cov}$ parameter of \cref{table:topological-inference}.
    Columns correspond to components of the pipeline that are turned off.
    Entries are the output homology recovery quotient.
    We observe that the measure and regularization losses are essential.
    The initialization using clustering is also essential, at least for real datasets.
    The geometry loss has no substantial effect, at least for these datasets.
    The topology loss has an effect, but only when the initialization is random.}
    \label{table:ablation}
    \begin{center}
        \small
        \csvautotabularcenter{results/ablation.csv}
    \end{center}
\end{table}

%\begin{figure}[H]
%    \centering
%    \begin{subfigure}[b]{0.5\textwidth}
%        \centering
%        \includegraphics[width=0.2\textwidth]{figures/2sphere_noise.png}
%        \caption{Lorem ipsum}
%    \end{subfigure}%
%    \caption{TODO}
%\end{figure}

\begin{figure}[H]
    \centering
    \subfigure[2-sph]{\includegraphics[width=0.4\linewidth]{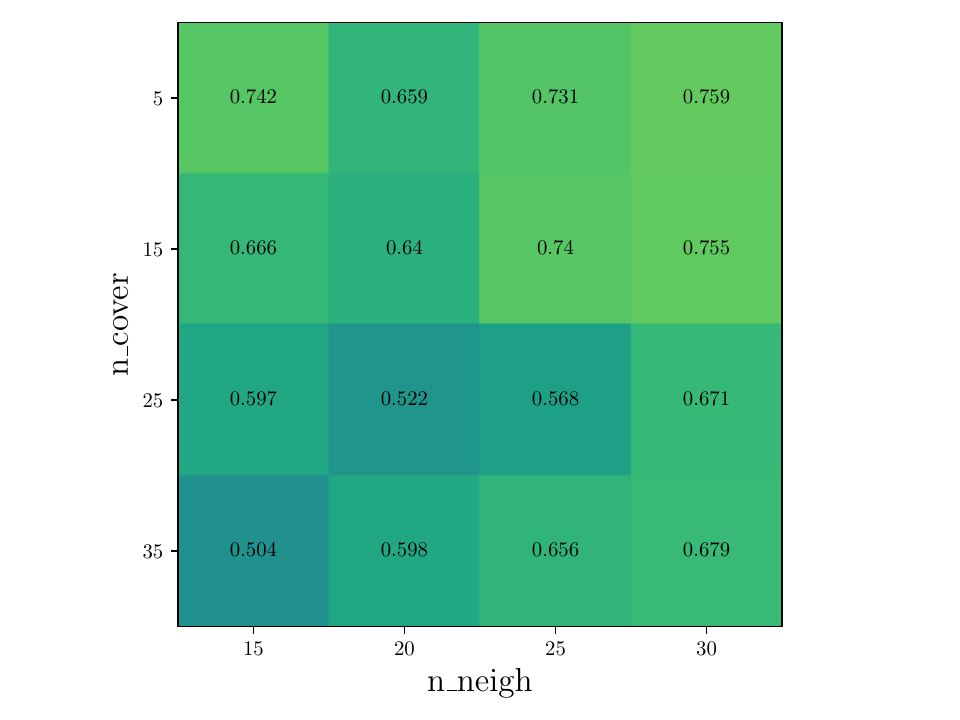}}
    \subfigure[3-sph]{\includegraphics[width=0.4\linewidth]{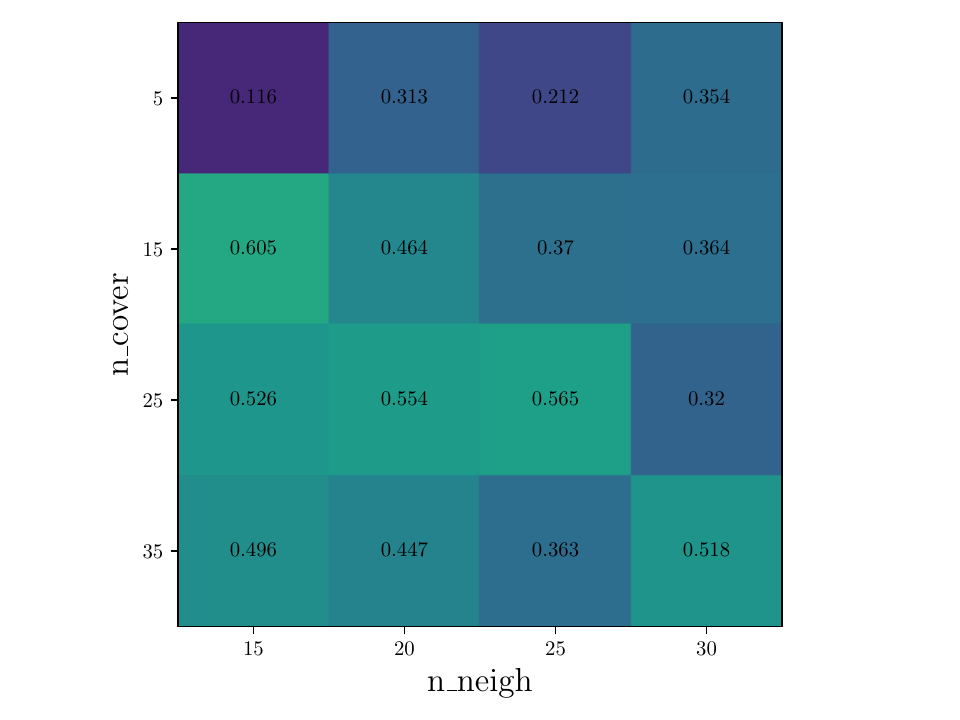}}

    \subfigure[2-sph]{\includegraphics[width=0.4\linewidth]{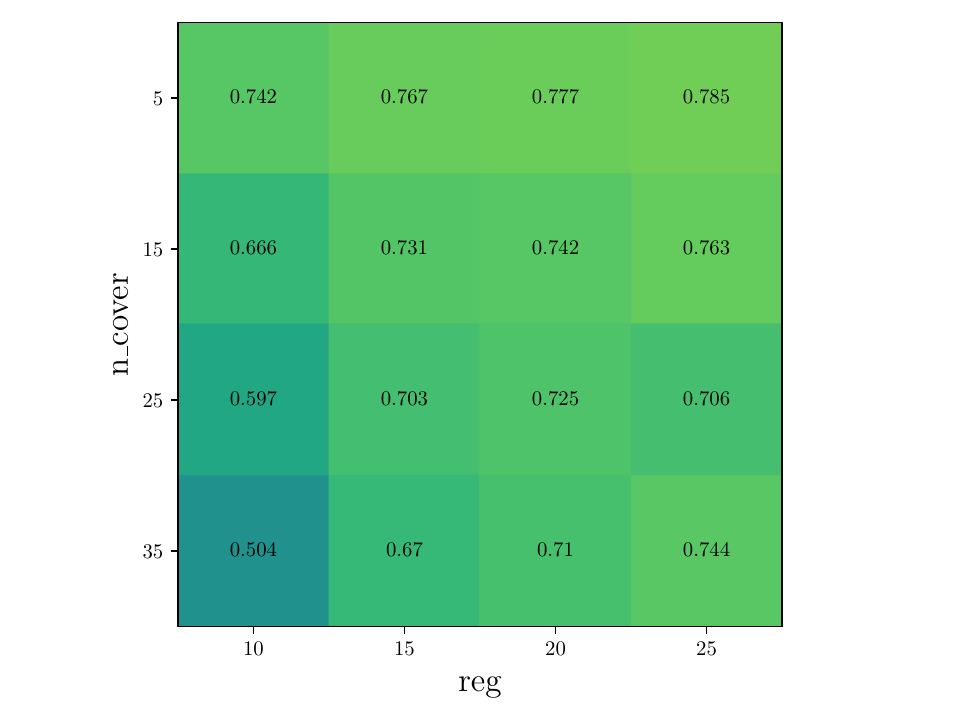}}
    \subfigure[3-sph]{\includegraphics[width=0.4\linewidth]{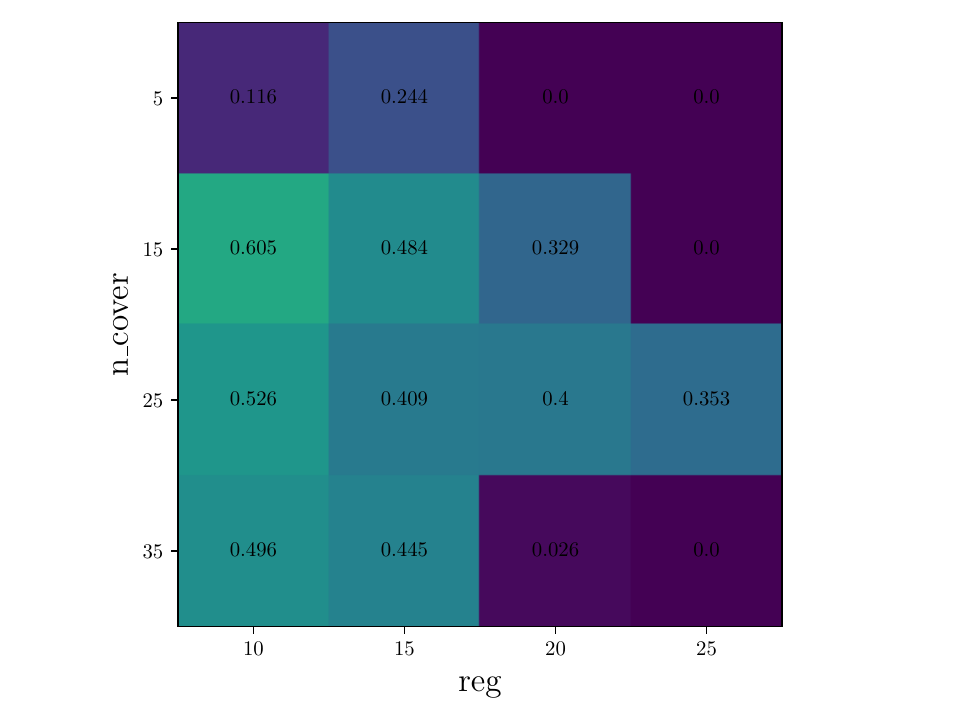}}
    \caption{We run ShapeDiscover on the 2- and 3-sphere datasets, with parameters as in \cref{table:topological-inference}, except for the ones being displayed on the axes, and compute homology recovery quotient (entries).}
    \label{table:parameter-sensitivity}
\end{figure}

\begin{figure}[H]
    \centering
    \subfigure[2-sph 5]{\includegraphics[width=0.4\linewidth]{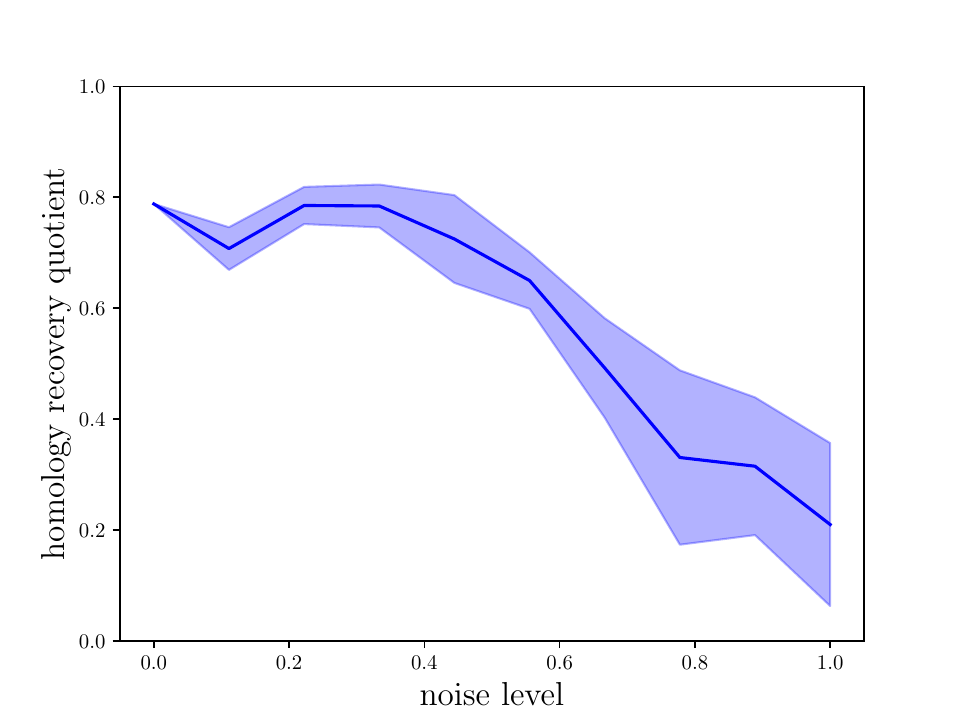}}
    \subfigure[3-sph 19]{\includegraphics[width=0.4\linewidth]{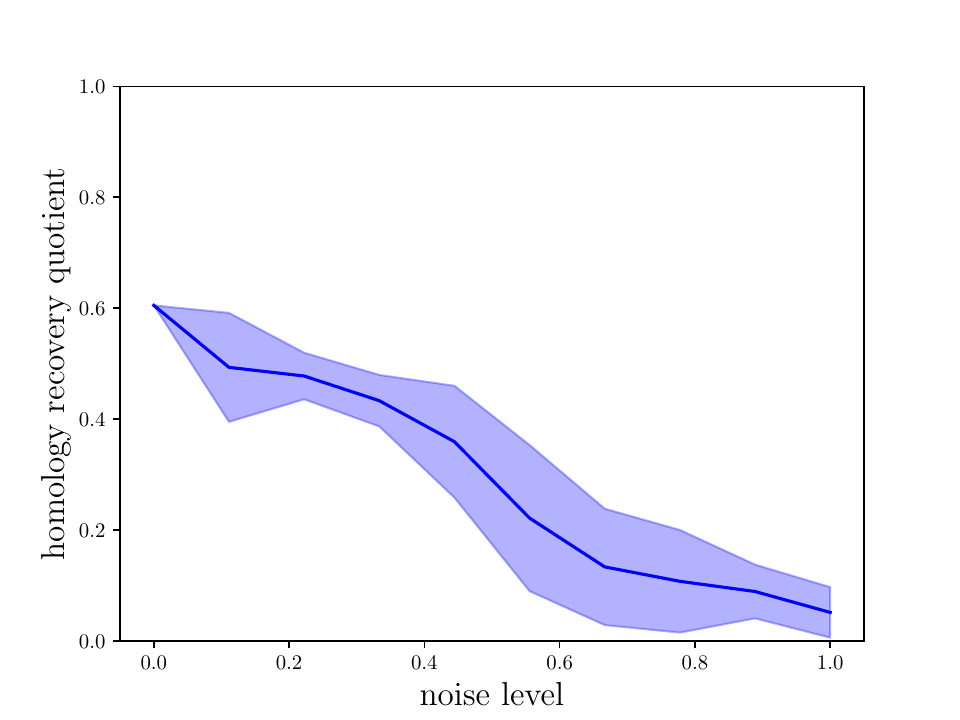}}
    \caption{We run ShapeDiscover on the 2- and 3-sphere datasets, with parameters as in \cref{table:topological-inference}, and compute homology recovery quotient.
    Data is normalized to exactly fit in a unit hypercube.
    Noise is additive, meaning that each coordinate of each data point is perturbed by a uniform random number in $[0,1]$.
    We display mean and standard deviation over $10$ runs for each level of noise.}
    \label{table:noise-sensitivity}
\end{figure}

%\subsection{Computation times}
%\label{section:computation-times}
%
%See \cref{table:computation-times}.

\end{document}